\definecolor{darkred}{RGB}{150,0,0}
\definecolor{darkgreen}{RGB}{0,150,0}
\definecolor{darkblue}{RGB}{0,0,200}
\newtheorem{theorem}{Theorem}[section]
\newtheorem{lemma}{Lemma}[section]
\newtheorem{definition}{Definition}[section]
\newtheorem{remark}[subsection]{Remark}
\newtheorem{example}[subsection]{Example}
\newcommand{\onenorm}[1]{\left\|#1\right\|_{\ell_1}}
\newcommand{\twonorm}[1]{\left\|#1\right\|_{\ell_2}}
\newcommand{\infnorm}[1]{\left\|#1\right\|_{\ell_\infty}}
\newcommand{\abs}[1]{\left|#1\right|}
\newcommand{\A}{\boldsymbol{A}}
\newcommand{\R}{\mathbb{R}}
\renewcommand{\Re}{\mathbb{Re}}
\renewcommand{\S}{\mathcal{S}}
\newcommand{\0}{\boldsymbol{0}}
\renewcommand{\a}{\boldsymbol{a}}
\renewcommand{\b}{\boldsymbol{b}}
\newcommand{\e}{\boldsymbol{e}}
\renewcommand{\c}{\boldsymbol{c}}
\newcommand{\x}{\boldsymbol{x}}
\newcommand{\y}{\boldsymbol{y}}
\renewcommand{\v}{\boldsymbol{v}}
\newcommand{\ty}{\tilde{\boldsymbol{y}}}
\newcommand{\z}{\boldsymbol{z}}
\newcommand{\X}{\boldsymbol{X}}
\newcommand{\Y}{\boldsymbol{Y}}
\newcommand{\Z}{\boldsymbol{Z}}
\newcommand{\N}{\mathcal{N}}
\newcommand{\bG}{\boldsymbol{\Gamma}}
\newcommand{\bW}{\mathbb{W}}
\newcommand{\st}{\operatorname{s.t.}}
\newcommand{\argmin}{\operatorname{argmin}}
\newcommand{\ie}{\text{i.e.}}
\newcommand{\eg}{\text{e.g.}}
\numberwithin{equation}{section} 
\def \endprf{\hfill {\vrule height6pt width6pt depth0pt}\medskip}
\newenvironment{proof}{\noindent {\bf Proof} }{\endprf\par}
\begin{document}
\title{Approximate Subspace-Sparse Recovery with Corrupted Data via Constrained $\ell_1$-Minimization}

\author{Ehsan~Elhamifar,~\IEEEmembership{Member,~IEEE,}
        Mahdi~Soltanolkotabi,~\IEEEmembership{Member,~IEEE,}
        and~S.~Shankar~Sastry,~\IEEEmembership{Fellow,~IEEE}
\IEEEcompsocitemizethanks{\IEEEcompsocthanksitem E. Elhamifar is an assistant professor in the College of Computer and Information Science, Northeastern University, USA. E-mail: eelhami@ccs.neu.edu.%
\IEEEcompsocthanksitem M. Soltanolkotabi is an assistant professor in the Department
of Electrical Engineering, University of Southern California, USA. E-mail: soltanol@usc.edu.
\IEEEcompsocthanksitem S. Shankar Sastry is a professor of the Electrical Engineering and Computer Sciences Department, University of California at Berkeley, USA. E-mail: sastry@eecs.berkeley.edu.}
\thanks{}}

\maketitle

\begin{abstract}
High-dimensional data often lie in low-dimensional subspaces corresponding to different classes they belong to. Finding sparse representations of data points in a dictionary built using the collection of data helps to uncover low-dimensional subspaces and address problems such as clustering, classification, subset selection and more. In this paper, we address the problem of recovering sparse representations for noisy data points in a dictionary whose columns correspond to corrupted data lying close to a union of subspaces. We consider a constrained $\ell_1$-minimization and study conditions under which the solution of the proposed optimization satisfies the approximate subspace-sparse recovery condition. More specifically, we show that each noisy data point, perturbed from a subspace by a noise of the magnitude of $\varepsilon$, will be reconstructed using data points from the same subspace with a small error of the order of $O(\varepsilon)$ and that the coefficients corresponding to data points in other subspaces will be sufficiently small, \ie, of the order of $O(\varepsilon)$. We do not impose any randomness assumption on the arrangement of subspaces or distribution of data points in each subspace. Our framework is based on a novel generalization of the null-space property to the setting where data lie in multiple subspaces, the number of data points in each subspace exceeds the dimension of the subspace, and all data points are corrupted by noise. Moreover, assuming a random distribution for data points, we further show that coefficients from the desired support not only reconstruct a given point with high accuracy, but also have sufficiently large values, \ie, of the order of $O(1)$.
\end{abstract}

\begin{IEEEkeywords}
Low-dimensional subspaces, sparse representation, noisy data points, $\ell_1$-minimization, subspace incoherence, subspace inradius, approximate recovery.
\end{IEEEkeywords}

\IEEEpeerreviewmaketitle

\section{Introduction}
\IEEEPARstart{H}{igh-dimensional} datasets are ubiquitous in many areas of science, such as computer vision, information retrieval, image processing, bio and health informatics and more. Real-world data, however, often lie close to low-dimensional subspaces instead of being uniformly distributed in the high-dimensional ambient space \cite{Basri:PAMI03, Tomasi:IJCV92, Hastie:StatSci98, Hong:TIP06, Yang:CVIU08, Elhamifar:TPAMI13}. Exploiting and recovering low-dimensional structures in data, in fact, is the key to efficiently address a variety of important problems such as classification \cite{Wright:PAMI09, Elhamifar:TSP12}, clustering \cite{Elhamifar:TPAMI13, Elhamifar:CVPR09, Lerman:Annals11, Lerman:CA14, Favaro:CVPR11, DelaTorre:CVPR12, Heckel:ISIT13}, %compression \cite{Hong:TIP06, Mishali:TSP11, Elhamifar:NIPS11, Blumensath:TIT11}, 
subset selection \cite{Elhamifar:CVPR12, Esser:TIP12, Elhamifar:NIPS12}, visualization as well as other applications \cite{Eldar:TIT09, Montanari:UAI12, Lerman:ICCV13, Sapiro:ICLR14}. 

Sparse representation techniques provide effective tools to exploit and uncover the low-dimensional structures in datasets \cite{Donoho:CPAM06, Candes-Tao:TIT05, Tibshirani:RSS96}. %and efficiently address the aforementioned tasks. 
More specifically, given a measurement $\y \in \Re^n$ and a dictionary or a sensing matrix $\A \in \Re^{n \times N}$, which has a nontrivial null-space, the goal of sparse recovery is to find a representation $\c \in \Re^N$ of $\y$ as a linear combination of the columns of $\A$, such that $\c$ has only a few nonzero coefficients. A computationally efficient method to achieve this goal is to solve the $\ell_1$-minimization program%This is done via the $\ell_1$-minimization program
\begin{equation}
\min \| \c \|_1 ~~ \st ~~ \y = \A \c.
\end{equation}
In fact, $\| \c \|_1$, which is the sum of the absolute values of elements of $\c$, is the convex envelope of the cardinality of $\c$ and is known to recover sparse solutions, under appropriate conditions on the dictionary and the sparsity level \cite{Donoho:CPAM06, Candes-Tao:TIT05, Tibshirani:RSS96, Elad:SIAM09, Yu:AnnalStat09}. 

Sparse representation-based methods can be divided into two categories, depending on the type of dictionaries being used. The first group of methods uses fixed pre-defined dictionaries, such as the ones built from Wavelets, Fourier basis, Random Projections and so on \cite{Elad:SIAM09, Candes:TIT06, Candes:SPM08}. The second group of methods uses adaptive dictionaries built from the collection of data, where the columns of the dictionary $\A$ correspond to data points \cite{Elhamifar:TPAMI13, Wright:PAMI09, Elhamifar:Annals14}. In fact, the latter has achieved or outperformed state-of-the-art results in clustering and classification of high-dimensional data.
%
%An important class of sparse representation-based algorithms relies on using data as the columns of the dictionary $\A$ \cite{Elhamifar:TPAMI13, Wright:PAMI09, Elhamifar:CVPR11}. 
Under the assumption that the data points lie in a union of subspaces, with the number of data in each subspace being larger than the dimension of the subspace, a sparse representation of $\y$, ideally, corresponds to a subspace-sparse representation. In other words, $\y$ can be written as a linear combination of a few data points that lie in the same low-dimensional subspace. In fact, subspace-sparse recovery is the key requirement for the success of sparse representation-based clustering, classification and subset selection algorithms \cite{Elhamifar:TPAMI13, Elhamifar:TSP12, Heckel:ISIT13, Elhamifar:CVPR12, Elhamifar:ICASSP10, Candes:ASTAT12, Leng:NIPS13, Elhamifar:TPAMI14}. %, which has been the subject of recent studies in the~literature \cite{Elhamifar:TPAMI13, Heckel:ISIT13, Elhamifar:TSP12, Elhamifar:CVPR12, Elhamifar:ICASSP10, Candes:ASTAT12, Leng:NIPS13, Elhamifar:TPAMI14}. %. There has been theoretical studies in the literature addressing the theoretical guarantees for subspace-sparse recovery in the union of subspace setting.
One can show that when data points perfectly lie in subspaces, under appropriate conditions on the affinities between subspaces and the distribution of data, the solution of $\ell_1$-minimization perfectly recovers a subspace-sparse representation \cite{Elhamifar:TPAMI13, Candes:ASTAT12}.

%However, when it comes to real datasets, an important challenge is that data are often corrupted by measurement or process noise. 
An important challenge related to real-world datasets is that data points are often corrupted by noise. In other words, not only $\y$, but also all columns of the dictionary $\A$ are corrupted by noise. As a result, standard analysis tools related to the first group of sparse recovery methods, in which the predefined dictionary $\A$ is uncorrupted while the measurement $\y$ is noisy, are not applicable \cite{Eldar:TIT09, Donoho:CPAM06-2, Candes:RIP08}. %Therefore, there is a need to develop subspace-sparse recovery algorithms and study their theoretical guarantees in the setting where data lie in a union of subspaces and are corrupted by noise. %in the union of subspaces setting in the case where all data points are corrupted.
Recently, \cite{Elhamifar:Annals14, Wang:ICML13} studied the problem of subspace-sparse recovery in the presence of noise using the unconstrained optimization program
\begin{equation}
\label{eq:lasso0}
\min \lambda \| \c \|_1 + \frac{1}{2} \twonorm{\y - \A \c}^2,
\end{equation}
where the regularization parameter $\lambda > 0$ sets a trade-off between sparsity and reconstruction error objectives. \cite{Elhamifar:Annals14} shows that, when data points are drawn uniformly at randomly from the intersection of the hypersphere and subspaces, under appropriate conditions on subspace affinities and data points and for certain range of $\lambda$, the solution of \eqref{eq:lasso0} recovers subspace-sparse representations for all data points. On the other hand, \cite{Wang:ICML13} analyzes the solution of \eqref{eq:lasso0} under more general settings, including the deterministic case where both subspaces and data points are fixed, and under appropriate conditions, proves exact subspace-sparse recovery.

Notice that while the performance of both unconstrained and constrained $\ell_1$ has been analyzed in conventional sparse recovery \cite{Yu:AnnalStat09, Donoho:CPAM06-2, Candes-Romberg-Tao:CPAM06, Tropp:TIT06}, where $\A$ is noise-free without the multi-subspace structure redundancy, the analysis of sparse recovery for the case of noisy multi-subspace data with corrupted dictionary has been limited to unconstrained $\ell_1$-minimization in \eqref{eq:lasso0}. In fact, we believe that this is partly due to the fact that analyzing the constrained optimization is much harder and requires development of new analysis tools. Moreover, current results \cite{Wang:ICML13, Elhamifar:Annals14} do not show how large coefficients from the desired support can be, which is an important factor for successful clustering and classification \cite{Elhamifar:TPAMI13, Wright:PAMI09}.

\medskip\noindent\textbf{Paper Contributions.} 
In this paper, we study the problem of approximate subspace-sparse recovery in the presence of noise using the constrained $\ell_1$-minimization program
\begin{equation}
\label{eq:L0}
\min \| \c \|_1 ~~ \st ~~ \twonorm{\y - \A \c} \leq \gamma \varepsilon,
\end{equation}
with a regularization parameter $\gamma > 0$, which we determine in the paper. We consider the general settings, where we do not impose any randomness assumption on the arrangement of subspaces or distribution of data points in each subspace. We assume that all data points are corrupted by Gaussian noise whose Euclidean norm is smaller than or equal to $\varepsilon$. We show that, under appropriate conditions on the data and subspaces, the solution of \eqref{eq:L0} satisfies the approximate subspace-sparse recovery property, \ie, 1) $\y$ will be reconstructed using data point from its underlying subspace with an error that is of the order of   $O(\varepsilon)$; 2) coefficients corresponding to data points in other subspaces are sufficiently small, of the order~of~$O(\varepsilon)$. Our theoretical results relies on a novel generalization of the well-known null-space property, studied in conventional sparse recovery \cite{DonohoElad:PNAS03,Gribonval:TIT03, Stojnic:TSP09, VandenBerg:TIT10}, to the setting where 1) data lie in a union of subspaces, with the number of data points in each subspace typically being larger than the subspace dimension; 2) all data points are corrupted by~noise. Moreover, assuming random distribution for data points, we further show that in the solution of \eqref{eq:L0}, coefficients from the desired support not only reconstruct $\y$ with high accuracy, but also have sufficiently large values, \ie, are of the order of $O(1)$.

\medskip\noindent\textbf{Paper Organization.} The organization of this paper is as follows. In Section \ref{sec:probstatement}, we present the settings of our problem. We state the approximate subspace-sparse recovery problem and introduce appropriate definitions and notations. In Section \ref{sec:theory}, we present our theoretical guarantees for the constrained $\ell_1$-minimization program. Finally,  Section \ref{sec:conclusion} concludes the paper.

%%%%%%%%%%%%%%%%%%%%%%%%%%%%%%%%%%%%%%%%%%%%%%
\section{Problem Formulation and Main Results}
\label{sec:probstatement}
In this section, we consider the problem of finding sparse representations for corrupted data points that lie close to a union of subspaces. Assume that we have $L$ linear subspaces $\{ \S_i \}_{i=1}^{L}$ in $\R^n$ of dimensions $\{ d_i \}_{i=1}^{L}$. Let $\X \in \R^{n \times N}$ denote a matrix whose columns correspond to noise-free data points that lie in the union of the $L$ subspaces. Without loss of generality, we assume that the columns of $\X$ have unit Euclidean norms. We denote by $\X_i \in \R^{n \times N_i}$ the $N_i$ data points that lie in $\S_i$, hence $\sum_{i=1}^{L}{N_i} = N$. We can write 
\begin{equation}
\label{eq:Xdef}
\X \triangleq \begin{bmatrix} \X_1 & \X_2 & \cdots & \X_L \end{bmatrix} \boldsymbol{\Gamma} \in \R^{n \times N},
\end{equation}
where $\bG \in \Re^{N \times N}$ is a permutation matrix, which is not necessarily known a priori. Given $\x$ that lies in one of the subspaces, the subspace-sparse recovery problem refers to the problem of finding a representation of $\x$ in the dictionary $\X$, as $\x = \X \c$, such that the nonzero coefficients of $\c$ correspond to a few data points that lie in the same subspace as that of $\x$. More specifically, considering the sparse optimization program
\begin{equation}
\c^* = \arg\min \onenorm{\c} \quad \st \quad \x = \X \c,
\end{equation}
one would like to have a few nonzero elements in $\c^*$ that correspond to data points lying in the same subspace of $\x$. 

In real-world problems, however, data points often do not lie perfectly in subspaces, due to corruption by noise. Instead, they lie approximately close to a union of subspaces. In this paper, we address the problem of approximate subspace-sparse recovery in the presence of noise. More precisely, we assume that we have a collection of noisy data points $\Y_i \in \R^{n \times N_i}$ from each subspace $\S_i$, \ie,
\begin{equation}
\Y_i = \X_i + \Z_i,
\end{equation}
where $\X_i$ denotes the collection of noise-free data points, which lie at the intersection of $\S_i$ with the unit hypersphere, and $\Z_i$ denotes the random noise matrix, which has i.i.d elements drawn from the Gaussian distribution $\N(0,\frac{\epsilon^2}{n})$. As a result, each noise-free data point of unit Euclidean norm on each subspace is corrupted by a noise whose Euclidean norm is roughly less than or equal to $\varepsilon$, where
\begin{equation}
\varepsilon \triangleq \epsilon (1 + \rho),
\end{equation}
for a sufficiently small $\rho > 0$. We also assume that $\x \in \S_i$, which has unit Euclidean norm, is corrupted by a noise $\z$, which has i.i.d elements drawn from $\N(0,\frac{\epsilon^2}{n})$, giving rise to the noisy data point $\y = \x + \z$. 
\vspace{1mm}
\begin{remark}
Notice that for a Gaussian random vector $\z \in\R^n$ with i.i.d entries drawn from $\N(0,\frac{\epsilon^2}{n})$, with high probability, we have $\twonorm{\z} \leq \varepsilon$. For the sake of brevity, throughout the paper, we do not include explicitly the failure probability of $\twonorm{\z} \leq \varepsilon$ in the probabilistic statements of our~results. 
\end{remark}
\vspace{1mm}
For simplicity of notation, we denote $\X = \begin{bmatrix} \X_i & \X_{-i}  \end{bmatrix}$, where $\X_{-i}$ represents the collection of data points from all subspaces except $\S_i$. Similarly, we write $\Y = \begin{bmatrix} \Y_i & \Y_{-i}  \end{bmatrix}$, where $\Y_{-i}$ denotes the collection of noisy data points from all subspaces except $\S_i$. We also use the convention $\y \in \S_i^{\varepsilon}$ to refer to a noisy data point that is the sum of a noise-free data point $\x$ in $\S_i$ with unit Euclidean norm and a noise $\z$ whose Euclidean norm is smaller than or equal to $\varepsilon$, \ie, 
\begin{equation}
\label{eq:noisySi}
\S_i^\varepsilon \triangleq \{ \y \in \Re^n: ~\y = \x + \z, ~\x \in \S_i, ~\twonorm{\z} \leq \varepsilon \}.
\end{equation}
Our goal is to find an approximate subspace-sparse representation, $\c^\top = \begin{bmatrix} \c_i^\top & \c_{-i}^\top \end{bmatrix}$, of a noisy data point $\y$ in the dictionary of corrupted data, $\Y$, as we define next. 
\vspace{1mm}
\begin{definition} [approximate subspace-sparse recovery]
\label{def:approx-subspace-sparse-recovery}
Consider a noisy data point $\y$ lying in $\S_i^\varepsilon$ and a noisy dictionary $\Y$, where the Euclidean norm of the noise on the its columns is less than or equal to $\varepsilon.$ An approximate subspace-sparse recovery of $\y$ in $\Y$ corresponds to a representation $\y = \Y \c$, such that 

\begin{figure}[t!]
\centering
\includegraphics[width=0.482\linewidth, trim = 0 0 0 0, clip]{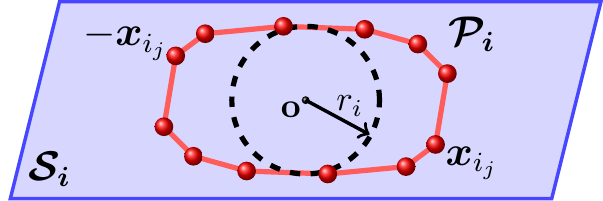}\ \hspace{1mm}
\includegraphics[width=0.482\linewidth, trim = 0 0 0 0, clip]{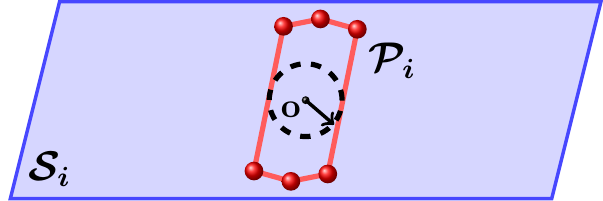}
\caption{Left: The subspace inradius associated with $\S_i$ is the radius of the largest Euclidean ball whose intersection with $\S_i$ is inscribed in the symmetrized convex hull of data points in $\S_i$. Right: When data are not well distributed in a subspace, i.e., they are close to a degenerate subspace, \eg, a line inside a plane, the subspace inradius decreases.}
\label{fig:def-inradius}
\end{figure}
\begin{figure}[t!]
\centering
\includegraphics[width=0.47\linewidth, trim = 0 0 0 0, clip]{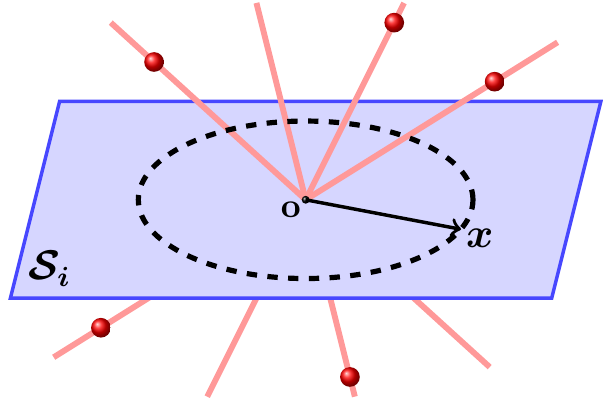}
\vspace{-2mm}
\caption{The subspace incoherence associated with $\S_i$ is defined as the maximum inner product between an arbitrary vector of unit Euclidean norm in $\S_i$ and data points in other~subspaces.}
\label{fig:def-incoherence}
\end{figure}

\noindent 1) $\y$ can be reconstructed, with high accuracy, using noisy data points from its own subspace, \ie,
\begin{equation}
\label{eq:approx1}
\twonorm{\y - \Y_i \c_i} \leq O(\varepsilon);
\end{equation}
2) the nonzero coefficients corresponding to noisy data points in other subspaces are sufficiently small, \ie,
\begin{equation}
\label{eq:approx2}
\onenorm{\c_{-i}} \leq O(\varepsilon).
\end{equation}
\end{definition}
In order to achieve an approximate subspace-sparse representation for $\y$ in the dictionary $\Y$, in this paper, we consider the the constrained $\ell_1$-minimization program 
\begin{equation}
\label{eq:L1noisy1}
\min \onenorm{\c} \quad \st \quad \twonorm{\y - \Y \c} \leq \gamma \varepsilon,
\end{equation}
where $\gamma > 0$ is a parameter that we determine in the paper. We investigate conditions on the data and subspaces under which the optimal solution of \eqref{eq:L1noisy1} achieves approximate subspace-sparse recovery. 

The conditions that we derive depend on the inradius of convex bodies of the data in each subspace and the incoherence between subspaces, as we define next.

\vspace{1mm}
\begin{definition}[subspace inradius] Let $\X_i \triangleq \begin{bmatrix} \x_{i1} & \x_{i2} & \cdots & \x_{iN_i} \end{bmatrix}$ be a matrix whose columns lie in $\S_i$. Denote by $P(\X_i)$ the symmetrized convex hull of $\X_i$. More precisely,
\begin{equation}
P(\X_i) \triangleq \operatorname{conv} (\pm \x_{i1} , \pm \x_{i2}, \ldots, \pm \x_{iN_i}).
\end{equation}
The subspace inradius associated with $\S_i$, which we denote by $r_i$, is defined as the radius of the largest Euclidean ball whose intersection with $\S_i$ is inscribed in $P(\X_i)$, see Figure \ref{fig:def-inradius}.
\end{definition}

\vspace{1mm}
\begin{definition}[subspace incoherence] 
\label{def:data-subspace-incoherence}
The subspace incoherence associated with $\S_i$ is defined as
\begin{equation}
\mu_i \triangleq \max_{\x \in \S_i,\twonorm{\x}=1}{\infnorm{ {\x^{\top} \X_{-i}} }}.
\end{equation}
In other words, $\mu_i$ is the maximum inner product between an arbitrary vector of unit Euclidean norm in $\S_i$ and the columns of $\X_{-i}$, which correspond to data points in other subspaces, see Figure \ref{fig:def-incoherence}.
\end{definition}
\vspace{1mm}
Notice that from the definition of the principal angles between subspaces, we always have $\mu_i \leq \max_{j \neq i} \cos \theta_{ij}$, where $\theta_{ij}$ denotes the smallest principal angle between $\S_i$ and $\S_j$.

In this paper, we show that, as long as the subspace incoherences between $\S_i$ and other subspaces are sufficiently small compared to the subspace inradius of $\S_i$, the optimization algorithm in \eqref{eq:L1noisy1}, for an appropriate $\gamma$, finds an approximate subspace-sparse representation for any $\y \in \S_i^\varepsilon$. More specifically, we prove the following result.
\begin{theorem}
\label{thm:maintheory}
\emph{
Let $\gamma \triangleq \max_{i}{2(1+\frac{2 \sqrt{2( \log N_i+\log n)}}{r_i})}$. Define $\beta$ as 
\begin{equation}
\label{eq:beta0}
\beta \triangleq ( 1 + \max_{i} \,  \frac{3 r_i}{r_i - (\mu_i + \varepsilon)} ) \, \frac{\gamma}{2} + \delta,
\end{equation}
where $\delta > 0$ is arbitrarily small.
Then, for every $i$ and every $\y \in \S_i^{\varepsilon}$, the solution of the optimization problem in \eqref{eq:L1noisy1}, denoted by $\c^{*\top} = \begin{bmatrix} \c_i^{*\top} & \c_{-i}^{*\top} \end{bmatrix}$, with high probability, satisfies 
\begin{equation}
\twonorm{\y - \Y_i \c_i^*} \leq \beta \varepsilon.
\end{equation}
In addition, assuming $\varepsilon \leq \frac{\gamma}{2 \beta + \gamma} r_i$, with high probability, we~have
\begin{equation}
\onenorm{\c_{-i}^*} \leq \frac{2 \beta+\gamma}{2 r_i} \varepsilon.
\end{equation}
}
\end{theorem}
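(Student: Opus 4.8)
The plan is to prove both statements through a primal--dual argument that transplants the inradius/incoherence geometry of the noise-free problem into the corrupted, union-of-subspaces setting. I would first fix the high-probability events on which everything runs: that every noise column obeys $\twonorm{\z_j}\le\varepsilon$ (so in particular $\twonorm{\z}\le\varepsilon$ and $\max_j\twonorm{\y_j}\le 1+\varepsilon$), and that the corrupted data inherit a \emph{noisy incoherence} bound, namely $\infnorm{\Y_{-i}^{\top}\boldsymbol{u}}\le\mu_i+\varepsilon$ uniformly over unit vectors $\boldsymbol{u}\in\S_i$. This last bound is exactly where the quantity $\mu_i+\varepsilon$ in \eqref{eq:beta0} originates, since $\iprod{\boldsymbol{u}}{\x_j}\le\mu_i$ by Definition \ref{def:data-subspace-incoherence} while $\iprod{\boldsymbol{u}}{\z_j}\le\varepsilon$. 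A union bound over the $N_i$ columns and a covering net of the $d_i$-dimensional unit sphere in $\S_i$ yields these events with high probability, and the concentration term $\sqrt{\log N_i+\log n}$ in $\gamma$ is precisely what makes the subsequent feasibility and certificate estimates hold simultaneously for all $\y\in\S_i^{\varepsilon}$.

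Next I would record the two facts the inradius supplies. On the primal side, since $\x\in\S_i$ has unit norm, the definition of $r_i$ gives $\x=\X_i\a_0$ with $\onenorm{\a_0}\le 1/r_i$; substituting into $\Y_i$ produces $\Y_i\a_0=\x+\Z_i\a_0$, whose distance to $\y$ is $\twonorm{\z-\Z_i\a_0}\le\varepsilon+\onenorm{\a_0}\max_j\twonorm{\z_{ij}}$, and the choice of $\gamma$ makes $\c_0=(\a_0,\boldsymbol{0})$ feasible for \eqref{eq:L1noisy1}. Optimality then gives $\onenorm{\c^*}\le\onenorm{\c_0}\le 1/r_i$. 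On the dual side, the polar of the symmetrized hull yields the reverse inequality $\infnorm{\X_i^{\top}\v}\ge r_i\twonorm{\v}$ for every $\v\in\S_i$, whence $\infnorm{\Y_i^{\top}\v}\ge(r_i-\varepsilon)\twonorm{\v}$; this is the \emph{restricted-invertibility} substitute that the inradius provides in place of an RIP.

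The core is the dual certificate. Writing the KKT conditions of \eqref{eq:L1noisy1} at an active constraint, there is a multiplier $\lambda\ge 0$ such that $\v^*=\lambda\,(\y-\Y\c^*)/\twonorm{\y-\Y\c^*}$ satisfies $\Y^{\top}\v^*\in\partial\onenorm{\c^*}$, so $\infnorm{\Y^{\top}\v^*}\le 1$ and $(\Y^{\top}\v^*)_j=\operatorname{sign}(c^*_j)$ on the support. Reading this on the columns of $\Y_{-i}$ and summing against $\c_{-i}^*$ gives the exact identity $\onenorm{\c_{-i}^*}=\iprod{\v^*}{\Y_{-i}\c_{-i}^*}=\iprod{\v^*}{\e}-\iprod{\v^*}{\boldsymbol{r}}$, where $\e\triangleq\y-\Y_i\c_i^*$ and $\boldsymbol{r}\triangleq\y-\Y\c^*$ with $\twonorm{\boldsymbol{r}}\le\gamma\varepsilon$. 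Since $\e=\boldsymbol{r}+\Y_{-i}\c_{-i}^*$, one gets $\twonorm{\e}\le\gamma\varepsilon+(1+\varepsilon)\onenorm{\c_{-i}^*}$, so the reconstruction bound $\twonorm{\y-\Y_i\c_i^*}\le\beta\varepsilon$ reduces to controlling $\onenorm{\c_{-i}^*}$; the factor $3r_i/(r_i-(\mu_i+\varepsilon))$ in $\beta$ emerges from bounding the certificate and collecting the $O(\varepsilon)$ error terms, which is exactly where the regime $\mu_i+\varepsilon<r_i$ is needed. The second claim then follows from the same identity: $\onenorm{\c_{-i}^*}\le\twonorm{\v^*}\,\twonorm{\e-\boldsymbol{r}}\le\lambda(\twonorm{\e}+\twonorm{\boldsymbol{r}})\le\lambda(\beta+\gamma)\varepsilon$, after which substituting the certificate-norm bound on $\lambda$ and invoking $\varepsilon\le\frac{\gamma}{2\beta+\gamma}r_i$ closes the estimate at $\frac{2\beta+\gamma}{2r_i}\varepsilon$, the role of that hypothesis being to keep $\onenorm{\c_{-i}^*}\le\gamma/2$ throughout the bootstrap.

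The main obstacle is bounding the multiplier $\lambda=\twonorm{\v^*}$ by an $O(1)$, noise-independent quantity. Unlike the noise-free case, $\v^*$ does not lie in $\S_i$: its in-subspace part $\Pi_i\v^*$ is controlled by the inradius lower bound through $\infnorm{\X_i^{\top}\Pi_i\v^*}\le 1+\varepsilon\lambda$, but its orthogonal part $\Pi_i^{\perp}\v^*$ is not constrained by $\Y_i$ and can a priori align with the other subspaces. Showing that the dual objective's penalty $\gamma\varepsilon\twonorm{\v}$ dominates the $O(\varepsilon)$ gain any orthogonal leakage can buy---thereby forcing $\twonorm{\Pi_i^{\perp}\v^*}$ to be small and yielding $\lambda\lesssim 1/(r_i-(\mu_i+\varepsilon))$---is the step that genuinely generalizes the null-space property to corrupted multi-subspace data, and is where I expect the analysis to be most delicate.
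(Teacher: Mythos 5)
Your route is genuinely different from the paper's: you argue through the KKT/dual certificate of the constrained program, whereas the paper never touches the dual. The paper's proof is a purely primal exchange argument: it defines the residual $\ty = \y - \Y_i\c_i^*$, shows $\ty$ lies in the set $\bW_i(\beta,\gamma,\varepsilon)$ of near-$\S_i$ vectors, and proves a ``noisy multi-subspace null-space property'' (Theorem \ref{thm:approximatessr2}) stating that the minimum $\ell_1$-cost of approximating any such $\ty$ by the noise-free in-subspace dictionary $\X_i$ (tolerance $\gamma\varepsilon/2$) is strictly smaller than the cost of approximating it by $\Y_{-i}$ (tolerance $\gamma\varepsilon$); the two claims of the theorem then follow by swapping blocks of the optimal solution to produce a strictly cheaper feasible point, contradicting optimality (Theorems \ref{thm:approximatessr1} and \ref{thm:approx-support-recovery}). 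This avoids any dual multiplier entirely, which is precisely the object your argument cannot control.

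That is the genuine gap: your entire chain hinges on an $O(1)$ bound on $\lambda = \twonorm{\v^*}$, and you explicitly defer it (``where I expect the analysis to be most delicate''). Worse, even granting your conjectured bound $\lambda \lesssim 1/(r_i - (\mu_i+\varepsilon))$, the bootstrap does not close. Combining your two inequalities $\twonorm{\e} \leq \gamma\varepsilon + (1+\varepsilon)\onenorm{\c_{-i}^*}$ and $\onenorm{\c_{-i}^*} \leq \lambda(\twonorm{\e} + \gamma\varepsilon)$ yields $\twonorm{\e}\bigl(1 - (1+\varepsilon)\lambda\bigr) \leq \gamma\varepsilon\bigl(1+(1+\varepsilon)\lambda\bigr)$, which is vacuous unless $(1+\varepsilon)\lambda < 1$; but since the columns of $\X_i$ have unit norm, $r_i \leq 1$, so $1/(r_i-(\mu_i+\varepsilon)) \geq 1$ and the required smallness of $\lambda$ is incompatible with its conjectured magnitude. (A direct attempt to bound $\lambda$ from the dual optimality condition $\iprod{\y}{\v^*} - \gamma\varepsilon\twonorm{\v^*} = \onenorm{\c^*}$ only gives $\lambda = O(1/\varepsilon)$, which is useless here.) There is also a structural mismatch: you derive the reconstruction bound \emph{from} a bound on $\onenorm{\c_{-i}^*}$, but the theorem's first claim holds without the hypothesis $\varepsilon \leq \frac{\gamma}{2\beta+\gamma} r_i$, which is only needed for the second; the paper's ordering (reconstruction first by contradiction, then support recovery) respects this, while yours inverts it. Your high-probability events, the feasibility argument giving $\onenorm{\c^*}\leq 1/r_i$, and the identification of $\mu_i+\varepsilon$ as the noisy incoherence all match the paper's Lemmas \ref{lem:noisebound} and \ref{lem:L1bound} and Step 2 of Theorem \ref{thm:approximatessr2}, but the core of the argument is missing.
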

Notice that, from \eqref{eq:beta0}, a necessary condition for the approximate subspace-sparse recovery is to have $\mu_i + \varepsilon < r_i$ for all $i$. This in fact makes sense since from earlier results \cite{Elhamifar:TPAMI13, Candes:ASTAT12}, in the noise-free setting, perfect subspace-sparse recovery holds as long as $\mu_i < r_i$ holds for all $i$. Thus, given the fact that data points are corrupted by noise whose Euclidean norm is about $\varepsilon$, there is a need for adjustment of the condition by incorporating the noise level. 

\begin{remark}
Our results in Theorem \ref{thm:maintheory} suggest that the smaller the ratio $(\mu_i + \varepsilon)/r_i$ is and the larger $r_i$ is, the better recovery we obtain using the $\ell_1$-minimization in \eqref{eq:L1noisy1}. This is expected, since a larger $r_i$ corresponds to a more even distribution of points in subspace $i$, i.e., farther from a degenerate subspace. On the other hand, a smaller $\mu_i$ corresponds to points in different subspaces being more dissimilar to each~other. 
\end{remark}
\begin{example}
Let $1/ \kappa \triangleq \max_{i} (\mu_i + \varepsilon)/r_i$, where $\kappa > 1$, from the necessary condition that $r_i$ must be greater than $\mu_i + \varepsilon$, as stated earlier. Also, let $c \triangleq 2 \sqrt{2 (\log N_{i^*}+\log n) }$ and $r \triangleq r_{i^*}$, where $i^*$ is the index for which we obtain the maximum value in the definition of $\gamma$ in Theorem \ref{thm:maintheory}. Hence, we have 
\begin{equation}
\gamma = 2 \, (1 + \frac{c}{r}).
\end{equation}
In addition, using the definition of $\beta$ in \eqref{eq:beta0}, we can write
\begin{equation}
\beta = (4 + \frac{3}{\kappa - 1}) (1 + \frac{c}{r}) + \delta.
\end{equation}
Clearly, the larger the value of subspace inradius $r$ is, the less error tolerance $\gamma$ we can allow for the reconstruction of a given $\y$ and, at the same time, the reconstruction of $\y$ using noisy points in its own subspace has a smaller error. In addition, as $\kappa$ increases, the error on the reconstruction of $\y$ using noisy points in its own subspace decreases. In the limiting case of $\kappa$ being large enough, we obtain
\begin{equation}
\twonorm{\y - \Y_i \c_i^*} \leq 4 \, (1 + \frac{c}{r}) \, \varepsilon.
\end{equation}
\end{example}

Assuming random distribution for data points, we can further show that in the solution of the $\ell_1$-minimization program  \eqref{eq:L1noisy1}, the coefficients from the correct support, \ie, $\c_i^*$, not only reconstruct $\y$ with a high accuracy, but also have sufficiently large values. More specifically, we show the following result.
\begin{theorem}
Assume that the noise-free data in each subspace $\S_i$, \ie, the columns of $\X_i$, are drawn uniformly at random from the intersection of the unit hypersphere with $\S_i$. Let $\c^{*\top} = \begin{bmatrix} \c_i^{*\top} & \c_{-i}^{*\top} \end{bmatrix}$ be the solution of the optimization program in \eqref{eq:L1noisy1} for a noisy data point $\y$ in $\S_i^{\varepsilon}$. Assume that the approximate reconstruction condition $\twonorm{\y - \Y_i \c_i^*} \leq \beta \varepsilon$ holds.
%
%Let $\c^{*\top} = \begin{bmatrix} \c_i^{*\top} & \c_{-i}^{*\top} \end{bmatrix}$ be the solution of the optimization program in \eqref{eq:L1-noisy} for $\y \in \S_i^{\varepsilon}$. If the approximate subspace-sparse recovery $\twonorm{\y - \Y_i \c_i^*} \leq \beta \varepsilon$ holds and $\varepsilon \leq \frac{1}{\sqrt{2}}\frac{\eta}{\beta+\eta} r_i$, then, with probability at least $1-d_ie^{-{(\sqrt{2}-1)}^2\frac{m}{2}}$, we have
%%
%\begin{equation}
%\label{eq:approxSuppRecov1}
%\onenorm{\c_{-i}^*} \leq \frac{\beta+\eta}{r_i} \varepsilon.
%\end{equation}
%%
Then, with high probability, we have 
\begin{equation}
\label{eq:approxSuppRecov2}
%\onenorm{\c_i^*} \geq \frac{1-\frac{1}{\sqrt{m}} \, \varepsilon - \beta \, \varepsilon}{\frac{\sqrt{2\log N_i}}{\sqrt{d_i}} + \frac{\sqrt{2\log{N_i}}}{\sqrt{m}} \, \varepsilon}.
\onenorm{\c_i^*} \geq \frac{1 - (\beta + 1) \, \varepsilon}{2 \sqrt{ \frac{2\log N_i}{d_i} } + 2 \sqrt{ \frac{2\log N_i}{n} } \, \varepsilon}.
%\onenorm{\c_i^*} \geq \frac{ 1 - (\beta + \frac{1}{\sqrt{m}}) \, \varepsilon } { \sqrt{2\log N_i} (\frac{1}{\sqrt{d_i}}+ \frac{1}{\sqrt{m}} \, \varepsilon) }.
\end{equation}
\end{theorem}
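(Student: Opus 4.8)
The plan is to probe the approximate reconstruction $\y \approx \Y_i \c_i^*$ along the noise-free direction $\x$ of $\y$, converting the reconstruction accuracy into a lower bound on the scalar $\iprod{\x}{\Y_i \c_i^*}$, and then transferring this to $\onenorm{\c_i^*}$ by $\ell_1$--$\ell_\infty$ duality. Writing $\y = \x + \z$ with $\x \in \S_i$, $\twonorm{\x}=1$, $\twonorm{\z}\le \varepsilon$, I would first lower bound $\iprod{\x}{\y} = 1 + \iprod{\x}{\z} \ge 1 - \varepsilon$ by Cauchy--Schwarz on the last term. Combining this with the standing hypothesis $\twonorm{\y - \Y_i \c_i^*}\le \beta\varepsilon$ and one more application of Cauchy--Schwarz gives
\begin{equation}
\iprod{\x}{\Y_i \c_i^*} = \iprod{\x}{\y} - \iprod{\x}{\y - \Y_i \c_i^*} \ge (1-\varepsilon) - \beta\varepsilon = 1 - (\beta+1)\varepsilon .
\end{equation}

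Next I would upper bound the same scalar by Hölder's inequality in its $\ell_1$--$\ell_\infty$ form,
\begin{equation}
\iprod{\x}{\Y_i \c_i^*} = \iprod{\Y_i^\top \x}{\c_i^*} \le \infnorm{\Y_i^\top \x}\,\onenorm{\c_i^*},
\end{equation}
which is valid regardless of which columns $\c_i^*$ activates and of the signs of its entries, so no adaptivity of the support need be tracked. Chaining the two displays yields
\begin{equation}
\onenorm{\c_i^*} \ge \frac{1 - (\beta+1)\varepsilon}{\infnorm{\Y_i^\top \x}},
\end{equation}
which is already the claimed inequality once the denominator $\infnorm{\Y_i^\top \x} = \max_j |\iprod{\x}{\y_{ij}}|$ (the max over columns $\y_{ij}$ of $\Y_i$) is controlled by the quantity in the theorem, and provided $\varepsilon < 1/(\beta+1)$ so the numerator stays positive.

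The technical heart is this last bound on $\infnorm{\Y_i^\top \x}$. Writing $\y_{ij} = \x_{ij} + \z_{ij}$, I would split $|\iprod{\x}{\y_{ij}}| \le |\iprod{\x}{\x_{ij}}| + |\iprod{\x}{\z_{ij}}|$ and bound the maximum over $j$ of each piece separately. For the signal part, since the $\x_{ij}$ are uniform on the unit sphere of the $d_i$-dimensional subspace $\S_i$, the inner product $\iprod{\x}{\x_{ij}}$ with the fixed unit vector $\x$ concentrates around $0$ at scale $1/\sqrt{d_i}$; the standard spherical tail bound together with a union bound over the $N_i$ columns gives $\max_j |\iprod{\x}{\x_{ij}}| \le 2\sqrt{2\log N_i / d_i}$ with high probability. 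For the noise part, $\iprod{\x}{\z_{ij}} \sim \N(0,\epsilon^2/n)$ because $\twonorm{\x}=1$, so a Gaussian tail bound and a union bound over the $N_i$ columns give $\max_j |\iprod{\x}{\z_{ij}}| \le 2\sqrt{2\log N_i / n}\,\varepsilon$ (here $\epsilon \le \varepsilon$, so replacing $\epsilon$ by $\varepsilon$ only loosens the bound). Adding the two and substituting into the previous display produces exactly the stated estimate.

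The hard part will be the concentration bound on the signal inner products: it is what forces the $\sqrt{\log N_i / d_i}$ dependence, hence the $O(1)$ (rather than vanishing) size of $\onenorm{\c_i^*}$, and it is the only place where the uniform-on-the-sphere hypothesis is genuinely used. Everything else --- the two Cauchy--Schwarz steps, the Hölder duality, and the Gaussian tail estimate --- is routine; the constant $2$ appearing in each term of the denominator simply absorbs the slack from the tail bounds and the union-bound failure probability.
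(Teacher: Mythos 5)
Your proposal is correct and follows essentially the same route as the paper: probe the identity $\y = \Y_i \c_i^* + \e$ along $\x$, lower bound $\x^\top(\x+\z-\e)$ by $1-(\beta+1)\varepsilon$, upper bound $\abs{\x^\top \Y_i \c_i^*}$ via H\"older by $(\infnorm{\x^\top \X_i}+\infnorm{\x^\top \Z_i})\onenorm{\c_i^*}$, and control the two $\ell_\infty$ terms by spherical and Gaussian concentration with a union bound over the $N_i$ columns (the paper's Lemma in the Appendix). Your explicit remark that $\varepsilon < 1/(\beta+1)$ is needed for the bound to be nontrivial is a point the paper leaves implicit.
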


In the next section, we provide the required theoretical analysis tools to prove the above results. In fact, our theory relies on a novel generalization of the null-space property \cite{DonohoElad:PNAS03,Gribonval:TIT03, Stojnic:TSP09, VandenBerg:TIT10} to the setting where 1) data lie in a union of subspaces, with the number of data points in each subspace typically larger than the subspace dimension; 2) all data points are corrupted by noise.

%%%%%%%%%%%%%%%%%%%%%%%%%%%%%%%%%%%%%%%%%%%%%%%%%%%%%%%%%%%%
\section{Approximate Subspace-Sparse Recovery Theory}
\label{sec:theory}

In this section, we consider the $\ell_1$-minimization program
\begin{equation}
\label{eq:L1-noisy}
\begin{split}
\begin{bmatrix} \c_i^* \\ \c_{-i}^* \end{bmatrix}  \, = \, &\argmin \onenorm{ \begin{bmatrix} \c_i \\ \c_{-i} \end{bmatrix} } \\ &\st \quad \twonorm{ \y - \begin{bmatrix} \Y_i & \Y_{-i} \end{bmatrix} \begin{bmatrix} \c_i \\ \c_{-i} \end{bmatrix} } \leq \gamma \varepsilon,
\end{split}
\end{equation}
and investigate conditions under which we achieve approximate subspace-sparse recovery for an arbitrary noisy data point $\y \in \S_i^{\varepsilon}$. More precisely, we investigate conditions under which the optimal solution of \eqref{eq:L1-noisy} approximately reconstructs $\y$ from noisy data points in its own subspace, \ie, $\twonorm{\y - \Y_i \c_i^*}$ is bounded by $O(\varepsilon)$, and the coefficients corresponding to noisy data points in other subspaces are sufficiently small, \ie, $\onenorm{\c_{-i}^*}$ is of the order of  $O(\varepsilon)$. 

\subsection{Preliminary Lemmas}
To prove the main results of the paper, we make use of the following Lemmas. The proof of the first Lemma can be found in \cite{Candes:ASTAT12} and we provide the proofs of the other two Lemmas in the Appendix. 
\vspace{1mm}
\begin{lemma}
\label{lem:L1boundnonoise}
\emph{
Given a noise-free data point $\x \in \S_i$, the $\ell_1$-norm of the optimal solution of the minimization program
\begin{equation}
\c_i^*=\argmin \onenorm{\c} \quad \st \quad \x = \X_i \c,
\end{equation}
satisfies the following inequality 
\begin{equation}
\onenorm{\c_i^*} \leq \frac{\twonorm{\x}}{r_i}.
\end{equation}
}
\end{lemma}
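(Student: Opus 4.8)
The plan is to exploit the geometric meaning of the subspace inradius $r_i$ directly, turning it into an explicit representability statement, after which the bound follows by a single rescaling together with the trivial optimality of $\c_i^*$. The first step is to rewrite the symmetrized convex hull in the algebraically convenient form
\[
P(\X_i) \,=\, \{\, \X_i \b : \b \in \R^{N_i}, \ \onenorm{\b} \leq 1 \,\}.
\]
Indeed, every point of $P(\X_i)$ is a convex combination $\sum_j \lambda_j^+ \x_{ij} + \lambda_j^-(-\x_{ij})$ with $\lambda_j^\pm \geq 0$ and $\sum_j (\lambda_j^+ + \lambda_j^-) = 1$; setting $\beta_j \triangleq \lambda_j^+ - \lambda_j^-$ gives $\v = \X_i \b$ with $\onenorm{\b} \leq \sum_j(\lambda_j^+ + \lambda_j^-) = 1$. (The reverse inclusion, which is not strictly needed, holds because any $\X_i \b$ with $\onenorm{\b} \leq 1$ is a convex combination of the $\pm \x_{ij}$ with the leftover weight assigned to $\0 \in P(\X_i)$.)

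Next I would translate the inradius definition into this language. By definition, $r_i$ is the radius of the largest Euclidean ball whose intersection with $\S_i$ is inscribed in $P(\X_i)$; this is precisely the assertion that every $\v \in \S_i$ with $\twonorm{\v} \leq r_i$ belongs to $P(\X_i)$, and hence, by the reformulation above, admits a representation $\v = \X_i \b$ with $\onenorm{\b} \leq 1$.

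With these two facts in hand the argument is immediate. Assuming $\x \neq \0$ (the claim being trivial otherwise), I would scale $\x$ to the boundary of the inscribed ball by setting $\v \triangleq r_i\,\x / \twonorm{\x}$, which lies in $\S_i$ and satisfies $\twonorm{\v} = r_i$. The inradius property then yields some $\b$ with $\onenorm{\b} \leq 1$ and $\X_i \b = \v$. Rescaling, the vector $\c \triangleq (\twonorm{\x}/r_i)\,\b$ is feasible for the program, since $\X_i \c = (\twonorm{\x}/r_i)\,\v = \x$, and it obeys $\onenorm{\c} = (\twonorm{\x}/r_i)\,\onenorm{\b} \leq \twonorm{\x}/r_i$. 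Optimality of $\c_i^*$ finally gives $\onenorm{\c_i^*} \leq \onenorm{\c} \leq \twonorm{\x}/r_i$.

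The only genuine subtlety, and the step I would write most carefully, is the equivalence between the geometric inradius definition and the clean representability statement used above; everything else is a scaling. In particular one must confirm that ``the intersection of the ball with $\S_i$ being inscribed in $P(\X_i)$'' means containment of the entire relative ball $\{\v \in \S_i : \twonorm{\v} \leq r_i\}$ in $P(\X_i)$, rather than merely of its boundary sphere. This causes no difficulty because $P(\X_i)$ is convex and contains the origin, so containment of the sphere forces containment of the whole ball.
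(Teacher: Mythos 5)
Your proof is correct. Note that the paper does not actually prove this lemma itself; it defers to the cited reference \cite{Candes:ASTAT12}, where the standard argument proceeds through LP duality: the optimal value of the $\ell_1$ program equals $\max_{\boldsymbol{\nu}} \langle \x, \boldsymbol{\nu}\rangle$ subject to $\infnorm{\X_i^\top \boldsymbol{\nu}} \leq 1$, the constraint set is (the relevant slice of) the polar body $P(\X_i)^\circ$, and the bound follows because the circumradius of the polar equals $1/r_i$. Your route is the primal counterpart of that argument: you unwind the inradius definition into the containment $\{\v \in \S_i : \twonorm{\v}\leq r_i\} \subseteq P(\X_i) = \{\X_i\b : \onenorm{\b}\leq 1\}$, exhibit an explicit feasible $\c$ by rescaling, and invoke optimality. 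The two are dual faces of the same geometric fact (the optimal value is the gauge of $P(\X_i)$ evaluated at $\x$), so neither is "more" correct, but yours is more elementary and self-contained, requiring no duality theory; the dual formulation buys more in the broader program of this literature, since dual certificates are the tool used to establish the separation conditions in the noise-free subspace-sparse recovery results. Your handling of the two small subtleties --- the identification of $P(\X_i)$ with the $\ell_1$-ball image (including where the leftover convex weight goes) and the sphere-versus-ball reading of "inscribed" --- is exactly right, and the $\x = \0$ case is correctly dismissed. The only implicit hypothesis, shared with the statement itself, is $r_i > 0$, i.e., that the columns of $\X_i$ span $\S_i$.
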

In other words, the upper bound on the minimum $\ell_1$-norm representation of a noise-free data point $\x$ in $\S_i$ in terms of noise-free data points in $\S_i$ is proportional to the Euclidean norm of $\x$ and is inversely proportional to the subspace inradius $r_i$.
\vspace{1mm}
\begin{lemma}
\label{lem:noisebound} 
\emph{
For $\Z_i \in \R^{n \times N_i}$ with i.i.d entries drawn from $\N(0,\frac{\epsilon^2}{n})$ and a given $\c_i \in \R^{N_i}$, with probability at least $1-\frac{1}{(n N_i)^2}$, we have
\begin{equation}
\label{eq:noisenormbound}
\twonorm{\Z_i \c_i}\le 2 \varepsilon \, \sqrt{2(\log N_i+\log n)} \, \onenorm{\c_i}.
\end{equation}
}
\end{lemma}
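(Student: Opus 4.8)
The plan is to reduce the bound on $\twonorm{\Z_i \c_i}$ to a bound on the norms of the individual columns of $\Z_i$, and then to control those column norms through a union bound over the Gaussian entries. Writing $\z_{ij}$ for the $j$-th column of $\Z_i$, the triangle inequality gives $\twonorm{\Z_i \c_i} = \twonorm{\sum_{j} c_{ij}\, \z_{ij}} \le \sum_j \abs{c_{ij}}\, \twonorm{\z_{ij}} \le \onenorm{\c_i}\, \max_j \twonorm{\z_{ij}}$. This is the key reduction: it replaces the $\c_i$-dependent quantity by the data-independent random variable $\max_j \twonorm{\z_{ij}}$, so it suffices to show $\max_j \twonorm{\z_{ij}} \le 2\varepsilon\sqrt{2(\log N_i + \log n)}$ with the claimed probability. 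A side benefit is that the resulting event does not depend on $\c_i$, so the bound in fact holds uniformly over all $\c_i$ on the same high-probability event, which is exactly what the later null-space-property argument will want.

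Next I would bound each column norm by its largest entry: since $\z_{ij}$ has $n$ coordinates, $\twonorm{\z_{ij}} = \sqrt{\sum_{k=1}^n z_{ij,k}^2} \le \sqrt{n}\, \max_{j,k}\abs{z_{ij,k}}$. It therefore remains to control the maximum absolute entry of $\Z_i$ over all $n N_i$ coordinates. Each entry is $\N(0,\epsilon^2/n)$, so the standard Gaussian tail gives $\P(\abs{z_{ij,k}} > t) \le 2\exp(-n t^2/(2\epsilon^2))$. Choosing the threshold $t = \tfrac{2\varepsilon}{\sqrt n}\sqrt{2(\log N_i + \log n)}$ makes $\sqrt n\, t$ equal to the target bound, and substituting it into the tail bound, using $\varepsilon \ge \epsilon$ so that $\varepsilon^2/\epsilon^2 \ge 1$, yields a per-entry failure probability of at most $2(nN_i)^{-4}$.

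Finally, a union bound over the $n N_i$ entries gives total failure probability at most $2(nN_i)^{-3} \le (nN_i)^{-2}$, which matches the claim. On the complementary event we have $\max_{j,k}\abs{z_{ij,k}} \le t$, hence $\max_j \twonorm{\z_{ij}} \le \sqrt n\, t = 2\varepsilon\sqrt{2(\log N_i + \log n)}$, and combining with the first reduction proves the lemma.

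The main obstacle I anticipate is bookkeeping rather than conceptual: calibrating $t$ so that $\sqrt n\, t$ lands exactly on $2\varepsilon\sqrt{2(\log N_i + \log n)}$ while simultaneously keeping the per-entry failure probability small enough that the union bound over $nN_i$ entries still closes at $(nN_i)^{-2}$. This forces careful tracking of the distinction between the noise parameter $\epsilon$ and the effective noise level $\varepsilon = \epsilon(1+\rho)$; the factor $2$ in the statement is precisely the slack that lets the exponent $-4(1+\rho)^2\log(nN_i)$ dominate both the $\log 2$ and the extra $\log(nN_i)$ lost to the union bound.
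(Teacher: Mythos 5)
Your proof is correct and essentially matches the paper's: both arguments reduce to the identical intermediate bound $\twonorm{\Z_i \c_i} \le \sqrt{n}\,\onenorm{\c_i}\,\max_{j,k}\abs{z_{jk}}$ and then control the maximum absolute entry of $\Z_i$ by a Gaussian tail bound with a union bound over all $n N_i$ entries at the same threshold. The only cosmetic difference is that the paper slices $\Z_i$ by rows and applies H\"older's inequality to $\twonorm{\Z_i\c_i}^2 = \sum_j \langle \Z_i^{(j)},\c_i\rangle^2$, whereas you slice by columns and use the triangle inequality.
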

\vspace{1mm}
The result of the above Lemma implies that given $\Y_i = \X_i + \Z_i$ whose columns are noisy data points in $\S_i^{\varepsilon}$, the linear combination $\Y_i \c_i$ corresponds to perturbing the noise-free vector $\X_i \c_i$ lying in $\S_i$ with a noise whose Euclidean norm is bounded above by \eqref{eq:noisenormbound}.
\vspace{1mm}
\begin{lemma}
\label{lem:L1bound}
\emph{
Given a noisy data point in the $i$-th subspace, $\y \in \S_i^{\varepsilon}$, consider the $\ell_1$-minimization program
\begin{equation}
\label{eq:L1noisy}
\c^* = \argmin \onenorm{\c} \quad \st \quad \twonorm{\y - \Y \c} \leq \gamma \varepsilon,
\end{equation}
with $\gamma \triangleq \max_{i}{2(1 + \frac{2 \sqrt{2(\log N_i+\log n)}}{r_i})}$. With probability at least $1-\frac{1}{(n N_i)^2}$, we have
\begin{equation}
\onenorm{\c^*} \leq \frac{1}{r_i}.
\end{equation}
}
\end{lemma}
Thus, for an appropriately chosen error tolerance, the upper bound on the $\ell_1$-norm of the optimal representation of a noisy data point in $\S_i^{\varepsilon}$, as a linear combination of all noisy data points in $\Y$, is inversely proportional to the subspace inradius~$r_i$.
 
As a consequence of Lemmas \ref{lem:noisebound} and \ref{lem:L1bound}, for the optimal solution of \eqref{eq:L1-noisy}, we have
\begin{equation}
\begin{split}
\twonorm{\Z_i \c_i^*} &\leq 2 \varepsilon \, \sqrt{2(\log N_i+\log n)} \, \onenorm{\c_i^*} \\ &\leq 2 \varepsilon \, \frac{\sqrt{2(\log N_i+\log n)}}{r_i},
\end{split}
\end{equation}
where we used the fact that $\onenorm{\c_i^*} \leq \onenorm{\c^*} \leq \frac{1}{r_i}$.

\subsection{Main Results}
In this section, we prove our main result in Theorem \ref{thm:maintheory}. To do so, we consider an arbitrary vector $\tilde{\y}$ that lies close to $\S_i$ and whose Euclidean norm is larger than the approximate recovery noise level, \ie, $\twonorm{\tilde{\y}} > \beta \varepsilon$, where $\beta > 0.5 \gamma$. We consider the following $\ell_1$-minimization programs,
\begin{eqnarray}
\label{eq:subL1noisy1}
\!\!\!\!\!\!\!\!\!\!\!\! \a_i(\ty) \!~=& \!\!\!\! \argmin \onenorm{\a} \!\! &\st ~ \twonorm{\ty - \X_i \a} \leq \frac{\gamma}{2} \varepsilon,\\
\label{eq:subL1noisy2} 
\!\!\!\!\!\!\!\!\!\!\!\! \a_{-i}(\ty) \!~=& \!\!\!\! \argmin \onenorm{\a} \!\! &\st ~ \twonorm{\tilde{\y} - \Y_{-i} \a} \leq \gamma \varepsilon.
\end{eqnarray}
In other words, in \eqref{eq:subL1noisy1}, we consider approximate reconstruction of $\ty$ using noise-free data points in $\S_i$, and in \eqref{eq:subL1noisy2}, we consider approximate reconstruction of $\ty$ using noisy data points in subspaces other than $\S_i^\varepsilon$.

The structure of our theoretical analysis in the paper is as follows. First, in Theorem \ref{thm:approximatessr2}, we find conditions based on the inradius and incoherence of subspaces under which we have $\onenorm{\a_i(\ty)} < \onenorm{\a_{-i}(\ty)}$, for every $\ty$. Our result corresponds to a novel generalization of the null-space property \cite{DonohoElad:PNAS03,Gribonval:TIT03, Stojnic:TSP09, VandenBerg:TIT10} to the case where 1) data lie in a union of subspaces, with the number of data points in each subspace typically larger than the subspace dimension; 2) all data points are corrupted by~noise. Then, in Theorems \ref{thm:approximatessr1} and \ref{thm:approx-support-recovery}, we show that if the noisy multi-subspace null-space property holds, \ie, $\onenorm{\a_i(\ty)} < \onenorm{\a_{-i}(\ty)}$, for every $\ty$, then the optimization problem \eqref{eq:L1-noisy} achieves approximate subspace-sparse recovery according to Definition \ref{def:approx-subspace-sparse-recovery}.

For brevity of the notation, we denote $\a_i(\ty)$ and $\a_{-i}(\ty)$ by $\a_i$ and $\a_{-i}$, respectively, whenever the argument $\ty$ is clear from the context. To characterize the set of admissible $\ty$ in our theoretical analysis, we make use of the following definition.
\vspace{2mm}
\begin{definition}
\label{def:admissiblety}
We denote by $\bW_i(\beta,\gamma, \varepsilon)$ the set of all $\ty$ with $\twonorm{\tilde{\y}} > \beta \varepsilon$, which can be written as the sum of a noise-free vector in $\S_i$ and a noise whose Euclidean norm is smaller than or equal to $0.5 \gamma \varepsilon$, \ie,
\begin{equation}
\label{def:bW}
\begin{split}
\!\! \bW_i(\beta, \gamma, \varepsilon) \triangleq \{ \ty \in \Re^n\!: ~&\twonorm{\ty} \geq \beta \varepsilon, ~\ty = \y + \z, \\ &~\y \in \S_i,~ \twonorm{\z} \leq 0.5 \gamma \varepsilon \}.
\end{split}
\end{equation}
\end{definition}
Next, we show that for a suitable value of $\gamma$, which depends on the subspace inradius, and for suitable values of $\beta$, the noisy multi-subspace null-space property holds.  
\vspace{2mm}
\begin{theorem}[Noisy Multi-Subspace Null-Space Property]
\label{thm:approximatessr2}
\emph{
Let $\gamma \triangleq \max_{i}{2 \, (1+\frac{2 \sqrt{2(\log N_i+\log n)}}{r_i})}$. Define $\beta$ as 
\begin{equation}
\label{eq:sufficientcondition}
\beta \triangleq ( 1 + \max_{i} \,  \frac{3 r_i}{r_i - (\mu_i + \varepsilon)} ) \, \frac{\gamma}{2} + \delta,
\end{equation}
where $\delta > 0$ is an arbitrarily small nonnegative number. Then, for every $\tilde{\y}$ which belongs to $\bW_i(\beta, \gamma, \varepsilon)$, the solutions of the optimization programs \eqref{eq:subL1noisy1} and \eqref{eq:subL1noisy2} satisfy 
\begin{eqnarray}
\onenorm{\a_i (\ty)} < \onenorm{\a_{-i} (\ty)}.
\end{eqnarray}
}
\end{theorem}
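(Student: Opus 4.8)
The plan is to bound the two optimal $\ell_1$-norms separately and then compare them. Fix an arbitrary $\ty \in \bW_i(\beta, \gamma, \varepsilon)$ and write $\ty = \y + \z$ with $\y \in \S_i$, $\twonorm{\z} \leq 0.5\gamma\varepsilon$, and $\twonorm{\ty} \geq \beta\varepsilon$. Since $\beta > 0.5\gamma$, the triangle inequality already gives $\twonorm{\y} \geq \twonorm{\ty} - \twonorm{\z} \geq (\beta - 0.5\gamma)\varepsilon > 0$, so $\y \neq \0$ and the normalization used below is well defined. The whole argument will then reduce to a single scalar comparison between an upper estimate of $\onenorm{\a_i(\ty)}$ and a lower estimate of $\onenorm{\a_{-i}(\ty)}$.

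First I would upper bound $\onenorm{\a_i(\ty)}$. Let $\c$ be the minimum-$\ell_1$-norm representation of the \emph{noise-free} vector $\y$ in $\X_i$, i.e.\ $\y = \X_i \c$; by Lemma \ref{lem:L1boundnonoise}, $\onenorm{\c} \leq \twonorm{\y}/r_i$. Because $\twonorm{\ty - \X_i \c} = \twonorm{\z} \leq 0.5\gamma\varepsilon$, the vector $\c$ is feasible for \eqref{eq:subL1noisy1}, and hence by optimality $\onenorm{\a_i(\ty)} \leq \onenorm{\c} \leq \twonorm{\y}/r_i$.

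The main work, and the step I expect to be the crux, is the lower bound on $\onenorm{\a_{-i}(\ty)}$ via an incoherence argument. The idea is to test the feasibility constraint of \eqref{eq:subL1noisy2} against the unit vector $\v \triangleq \y/\twonorm{\y} \in \S_i$. For each column $\y_j = \x_j + \z_j$ of $\Y_{-i}$, with $\x_j$ in a subspace other than $\S_i$ and $\twonorm{\z_j} \leq \varepsilon$, the incoherence definition gives $\abs{\v^\top \x_j} \leq \mu_i$, so $\abs{\v^\top \y_j} \leq \mu_i + \varepsilon$; this is precisely where the noise on the dictionary forces the $+\varepsilon$ adjustment. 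Consequently $\infnorm{\v^\top \Y_{-i}} \leq \mu_i + \varepsilon$ and $\abs{\v^\top \Y_{-i}\a_{-i}} \leq (\mu_i+\varepsilon)\onenorm{\a_{-i}}$. On the other hand $\v^\top \ty = \twonorm{\y} + \v^\top \z \geq \twonorm{\y} - 0.5\gamma\varepsilon$, while the constraint $\twonorm{\ty - \Y_{-i}\a_{-i}} \leq \gamma\varepsilon$ gives $\abs{\v^\top(\ty - \Y_{-i}\a_{-i})} \leq \gamma\varepsilon$. Chaining these yields
\begin{equation}
(\mu_i + \varepsilon)\,\onenorm{\a_{-i}(\ty)} \;\geq\; \v^\top \ty - \gamma\varepsilon \;\geq\; \twonorm{\y} - 1.5\gamma\varepsilon,
\end{equation}
so $\onenorm{\a_{-i}(\ty)} \geq (\twonorm{\y} - 1.5\gamma\varepsilon)/(\mu_i + \varepsilon)$, a bound that is meaningful exactly when $\mu_i + \varepsilon < r_i$.

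Finally I would combine the two estimates. The desired strict inequality $\onenorm{\a_i(\ty)} < \onenorm{\a_{-i}(\ty)}$ holds as soon as $\twonorm{\y}/r_i < (\twonorm{\y} - 1.5\gamma\varepsilon)/(\mu_i + \varepsilon)$, which after clearing the (positive) denominators using $r_i > \mu_i + \varepsilon > 0$ is equivalent to
\begin{equation}
\twonorm{\y} \;>\; \frac{3 r_i}{r_i - (\mu_i + \varepsilon)}\cdot\frac{\gamma}{2}\,\varepsilon.
\end{equation}
The admissibility bound $\twonorm{\y} \geq (\beta - 0.5\gamma)\varepsilon$, together with the definition \eqref{eq:sufficientcondition} of $\beta$, gives $\twonorm{\y} \geq \bigl(\max_j \frac{3 r_j}{r_j - (\mu_j+\varepsilon)}\cdot\frac{\gamma}{2} + \delta\bigr)\varepsilon$, and since $\delta > 0$ this strictly exceeds the threshold for the index $i$, closing the argument. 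Two positivity checks remain routine: the denominators stay positive by the necessary condition $\mu_i + \varepsilon < r_i$, and the numerator $\twonorm{\y} - 1.5\gamma\varepsilon$ is positive because the threshold already forces $\twonorm{\y} > 1.5\gamma\varepsilon$ (as $3 r_i/(r_i-(\mu_i+\varepsilon)) > 3$).
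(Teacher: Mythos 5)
Your proposal is correct and follows essentially the same route as the paper's proof: the same upper bound on $\onenorm{\a_i}$ via Lemma \ref{lem:L1boundnonoise} and feasibility, the same lower bound on $\onenorm{\a_{-i}}$ by testing against the normalized noise-free part of $\ty$ with H\"older's inequality and the $(\mu_i+\varepsilon)$ incoherence bound, and the same final comparison using $\twonorm{\y}\geq(\beta-0.5\gamma)\varepsilon$ and the definition of $\beta$. The only cosmetic difference is that you keep the residual terms $\z$ and $\ty-\Y_{-i}\a_{-i}$ separate rather than first substituting to get a single perturbation of norm at most $1.5\gamma\varepsilon$, which changes nothing.
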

\vspace{2mm}
\begin{proof}
Consider $\ty$ in $\bW_i(\beta, \gamma, \varepsilon)$. We can write
\begin{equation}
\label{eq:ytildedef}
\tilde{\y} = \tilde{\x} + \tilde{\z},
\end{equation}
where from \eqref{def:bW}, we have $\tilde{\x} \in \S_i$ and $\twonorm{\tilde{\z}} \leq 0.5 \gamma \varepsilon$. Since $\twonorm{\ty} > \beta \varepsilon$, we have that $\twonorm{\tilde{\x}} > (\beta - 0.5 \gamma) \varepsilon$. We prove the result of the theorem in the following steps.
\smallskip\newline\noindent{\emph{Step 1:}} We find an upper bound on the $\ell_1$-norm of the solution of \eqref{eq:subL1noisy1} for $\tilde{\y}$, \ie, we show that
\begin{equation}
\onenorm{\a_i} \leq \frac{\twonorm{\tilde{\x}}}{r_i}.
\end{equation}
\smallskip\newline\noindent{\emph{Step 2:}} We find a lower bound on the $\ell_1$-norm of the solution of \eqref{eq:subL1noisy2} for $\tilde{\y}$, \ie, we show that, with high probability,
\begin{equation}
\frac{ \twonorm{\tilde{\x}} - 3 \gamma \varepsilon / 2 }{ \mu_i+\varepsilon } \leq \onenorm{\a_{-i}}.
\end{equation}
\smallskip\newline\noindent{\emph{Step 3:}} Combining the results of steps 1 and 2 and using the definition of $\beta$ in \eqref{eq:sufficientcondition}, we show that
\begin{equation}
\label{eq:suffcond2}
\onenorm{\a_i} \leq \frac{\twonorm{\tilde{\x}}}{r_i} < \frac{ \twonorm{\tilde{\x}} - 3 \gamma \varepsilon / 2 }{ \mu_i +\varepsilon } \leq \onenorm{\a_{-i}},
\end{equation}
obtaining the desired result. 
\medskip\newline\noindent{\emph{Proof of step 1:}} Our goal is to find an upper bound on the $\ell_1$-norm of the solution of \eqref{eq:subL1noisy1} for $\tilde{\y}$, defined in \eqref{eq:ytildedef}. Since $\tilde{\x}$ lies in $\S_i$, it can be written as a linear combination of noise-free data points in $\X_i$. Let
\begin{equation}
\b_i = \argmin \onenorm{\b} \quad \st \quad \tilde{\x} = \X_i \b.
\end{equation}
From Lemma \ref{lem:L1boundnonoise} we have $\onenorm{\b_i} \leq \frac{\twonorm{\tilde{\x}}}{r_i}$. In addition, using \eqref{eq:ytildedef}, we can write $\tilde{\y}$ as 
\begin{equation}
\ty = \tilde{\x} + \tilde{\z} = \X_i \b_i + \tilde{\z},
\end{equation} 
where $\twonorm{\tilde{\z}} \leq \gamma \varepsilon / 2$. As a result, $\b_i$ is a feasible solution for the $\ell_1$-minimization program in \eqref{eq:subL1noisy1}.  Hence, using the fact that $\a_i$ is the optimal solution of \eqref{eq:subL1noisy1}, we obtain
\begin{equation}
\onenorm{\a_i} \leq \onenorm{\b_i} \leq \frac{\twonorm{\tilde{\x}}}{r_i}.
\end{equation}
\newline\noindent{\emph{Proof of step 2:}} Our goal is to find a lower bound on the $\ell_1$-norm of the solution of \eqref{eq:subL1noisy2} for $\tilde{\y}$, defined in \eqref{eq:ytildedef}. By the feasibility of $\a_{-i}$ for the optimization program \eqref{eq:subL1noisy2}, we can write 
\begin{equation}
\ty = \Y_{-i} \a_{-i} + \v,
\end{equation}
where $\twonorm{\v} \leq \gamma \varepsilon$. Substituting the above equation into \eqref{eq:ytildedef}, we can write
\begin{equation}
\tilde{\x} = \Y_{-i} \a_{-i} + (\tilde{\z} - \v),
\end{equation}
where, $\twonorm{\tilde{\z}-\v} \leq 3 \gamma \varepsilon / 2$. Multiplying both sides of the above equation on the left by $\tilde{\x}^{\top} / \twonorm{\tilde{\x}}$ and using the H\"{o}lder's inequality, we obtain 
\begin{equation}
\begin{split}
\twonorm{\tilde{\x}} & \leq \infnorm{ \frac{\tilde{\x}^{\top}}{\twonorm{\tilde{\x}}} \Y_{-i} } \!\! \onenorm{\a_{-i}} + \frac{3}{2} \gamma \varepsilon \\
& \leq \bigg(\infnorm{ \frac{\tilde{\x}^{\top}}{\twonorm{\tilde{\x}}} \X_{-i} }\!\!\!\!+\infnorm{ \frac{\tilde{\x}^{\top}}{\twonorm{\tilde{\x}}} \Z_{-i} } \bigg)\onenorm{\a_{-i}} \\ 
& ~~~~~+ \frac{3}{2} \gamma \varepsilon ~~ \leq ~~ (\mu_i+\varepsilon) \onenorm{\a_{-i}} + \frac{3}{2} \gamma \varepsilon,
\end{split}
\end{equation}
where we used the fact that the Euclidean norm of each column of $\Z_{-i}\in\R^{n \times (N-N_i)}$ is at most $\varepsilon$, with high probability. Hence, we obtain the following lower bound on the optimal solution of \eqref{eq:subL1noisy2},
\begin{equation}
\frac{ \twonorm{\tilde{\x}} - 3 \gamma \varepsilon / 2 }{ \mu_i+\varepsilon } \leq \onenorm{\a_{-i}}.
\end{equation}
\newline\noindent{\emph{Proof of step 3:}} Using the definition of $\beta$ in \eqref{eq:sufficientcondition}, it is easy to verify that, we have
\begin{equation}
\frac{\mu_i+\varepsilon}{r_i} < 1 - \frac{3 \gamma}{2 \beta -  \gamma}.
\end{equation}
In addition, using the fact that $\twonorm{\tilde{\x}} \geq (\beta - 0.5 \gamma) \varepsilon$, we have
\begin{equation}
\frac{\mu_i+\varepsilon}{r_i} ~ < ~ 1 - \frac{3 \eta \varepsilon}{ (\beta -  \eta) \varepsilon } ~ \leq ~ \frac{ \twonorm{\tilde{\x}} - 3 \gamma \varepsilon / 2 }{ \twonorm{\tilde{\x}} },
\end{equation}
from which we obtain 
\begin{equation}
\label{eq:suffcond2}
\frac{\twonorm{\tilde{\x}}}{r_i} < \frac{ \twonorm{\tilde{\x}} - 3 \eta \varepsilon }{ \mu_i +\varepsilon }.
\end{equation}
Finally, combining \eqref{eq:suffcond2} with the results of steps 1 and 2, we obtain the desired result of the theorem, \ie,
\begin{equation}
\onenorm{\a_i} \leq \frac{\twonorm{\tilde{\x}}}{r_i} < \frac{ \twonorm{\tilde{\x}} - 3 \gamma \varepsilon / 2 }{ \mu_i +\varepsilon } \leq \onenorm{\a_{-i}}.
\end{equation}
\end{proof}
The result of Theorem \ref{thm:approximatessr2} shows that for a suitable value of the regularization parameter $\gamma$ and for a suitable $\beta$, the noisy multi-subspace null-space property $\onenorm{\a_i(\ty)} < \onenorm{\a_{-i}(\ty)}$ holds, for every $\ty \in \bW_i(\beta, \gamma, \varepsilon)$. In the next two theorems, we show that if the noisy multi-subspace null-space property holds, then the $\ell_1$-minimization program in \eqref{eq:L1-noisy}, with high probability, achieves approximate subspace-sparse recovery.
\vspace{1mm}
\begin{theorem}[Approximate Reconstruction]
\label{thm:approximatessr1}
\emph{
\noindent Let $\gamma \triangleq \max_{i}{2(1+\frac{2 \sqrt{2(\log N_i+\log n)}}{r_i})}$. Assume that there exists $\beta > 0.5 \gamma$ such that for every $\tilde{\y} \in \bW_i(\beta, \gamma, \varepsilon)$, the solutions of the optimization programs \eqref{eq:subL1noisy1} and \eqref{eq:subL1noisy2} satisfy $\onenorm{\a_i (\ty)} < \onenorm{\a_{-i} (\ty)}$. Then the solution of our proposed optimization program in \eqref{eq:L1-noisy}, with probability at least $1-\frac{1}{ (n N_i)^2 }$, satisfies
\begin{equation}
\label{eq:approxrecov1}
\twonorm{\y - \Y_i \c_i^*} \leq \beta \varepsilon.
\end{equation}
}
\end{theorem}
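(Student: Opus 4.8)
My plan is to argue by contradiction, exhibiting a strictly cheaper feasible point for \eqref{eq:L1-noisy} whenever the reconstruction error is too large. Suppose the conclusion fails, i.e. $\twonorm{\y - \Y_i \c_i^*} > \beta \varepsilon$, and set $\ty \triangleq \y - \Y_i \c_i^*$. The first task is to certify that $\ty \in \bW_i(\beta,\gamma,\varepsilon)$, so that the assumed null-space property applies to it. Writing $\y = \x + \z$ with $\x \in \S_i$, $\twonorm{\z}\le\varepsilon$, and $\Y_i \c_i^* = \X_i \c_i^* + \Z_i \c_i^*$, I would decompose $\ty = (\x - \X_i \c_i^*) + (\z - \Z_i \c_i^*)$, whose first summand lies in $\S_i$. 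By Lemma \ref{lem:L1bound} the optimum obeys $\onenorm{\c_i^*}\le\onenorm{\c^*}\le 1/r_i$, so Lemma \ref{lem:noisebound} bounds the noise part by $\twonorm{\z - \Z_i \c_i^*} \le \varepsilon + 2\varepsilon\sqrt{2(\log N_i+\log n)}\onenorm{\c_i^*} \le \varepsilon(1 + \tfrac{2\sqrt{2(\log N_i+\log n)}}{r_i}) = \tfrac{\gamma}{2}\varepsilon$; combined with the contradiction hypothesis $\twonorm{\ty}>\beta\varepsilon$, this places $\ty$ in $\bW_i(\beta,\gamma,\varepsilon)$.

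With $\ty$ admissible, the hypothesis gives $\onenorm{\a_i(\ty)} < \onenorm{\a_{-i}(\ty)}$. Next I would connect $\a_{-i}(\ty)$ to $\c_{-i}^*$: since $\twonorm{\ty - \Y_{-i}\c_{-i}^*} = \twonorm{\y - \Y \c^*} \le \gamma\varepsilon$, the vector $\c_{-i}^*$ is feasible for \eqref{eq:subL1noisy2}, whence $\onenorm{\a_{-i}(\ty)} \le \onenorm{\c_{-i}^*}$. Chaining these yields $\onenorm{\a_i(\ty)} < \onenorm{\c_{-i}^*} \le \onenorm{\c^*} \le 1/r_i$. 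This single chain delivers two things at once: the strict $\ell_1$ comparison needed for the contradiction, and the a priori bound $\onenorm{\a_i(\ty)} < 1/r_i$ that controls the extra noise incurred by switching dictionaries.

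Finally I would construct the competing point $\c' \triangleq (\c_i^* + \a_i(\ty),\ \0)$. Its cost is $\onenorm{\c'} \le \onenorm{\c_i^*} + \onenorm{\a_i(\ty)} < \onenorm{\c_i^*} + \onenorm{\c_{-i}^*} = \onenorm{\c^*}$, strictly below the optimum. For feasibility I write $\y - \Y \c' = \ty - \Y_i \a_i(\ty) = (\ty - \X_i \a_i(\ty)) - \Z_i \a_i(\ty)$; the first term has norm at most $\tfrac{\gamma}{2}\varepsilon$ by feasibility of $\a_i(\ty)$ in \eqref{eq:subL1noisy1}, and Lemma \ref{lem:noisebound} together with $\onenorm{\a_i(\ty)} < 1/r_i$ bounds the second by $2\varepsilon\sqrt{2(\log N_i+\log n)}/r_i < \tfrac{\gamma}{2}\varepsilon$. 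Hence $\twonorm{\y - \Y \c'} < \gamma\varepsilon$, so $\c'$ is feasible for \eqref{eq:L1-noisy} yet strictly cheaper than $\c^*$, contradicting optimality. Therefore $\twonorm{\y - \Y_i \c_i^*} \le \beta\varepsilon$.

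I expect the main obstacle to be precisely the feasibility check of $\c'$: replacing the noise-free dictionary $\X_i$ used in \eqref{eq:subL1noisy1} by the noisy $\Y_i$ injects the stray term $\Z_i \a_i(\ty)$, and the argument only closes because the null-space property routed through Lemma \ref{lem:L1bound} forces $\onenorm{\a_i(\ty)}<1/r_i$, which is exactly tight enough to keep this term below $\tfrac{\gamma}{2}\varepsilon$ given the definition of $\gamma$. The probabilistic qualifier originates entirely from the two invocations of Lemma \ref{lem:noisebound}, matching the stated failure probability $1/(nN_i)^2$.
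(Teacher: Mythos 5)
Your proposal is correct and follows essentially the same route as the paper's proof: contradiction via $\ty = \y - \Y_i\c_i^*$, certifying $\ty \in \bW_i(\beta,\gamma,\varepsilon)$ through the decomposition $(\x-\X_i\c_i^*)+(\z-\Z_i\c_i^*)$ with Lemmas \ref{lem:noisebound} and \ref{lem:L1bound}, using feasibility of $\c_{-i}^*$ for \eqref{eq:subL1noisy2} to get $\onenorm{\a_{-i}}\le\onenorm{\c_{-i}^*}\le 1/r_i$, and then exhibiting the cheaper feasible point $(\c_i^*+\a_i,\0)$ after controlling the stray term $\Z_i\a_i$. The only cosmetic difference is that you chain the $\ell_1$ inequalities directly rather than passing through the paper's intermediate feasible point $\bigl[\c_i^{*\top}\ \a_{-i}^{\top}\bigr]^{\top}$.
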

\begin{proof}
Let $\c^* = \begin{bmatrix} \c_i^* \\ \c_{-i}^* \end{bmatrix}$ be the solution of the $\ell_1$-minimization program \eqref{eq:L1-noisy}. For the sake of contradiction, assume that the condition in \eqref{eq:approxrecov1} does not hold, \ie, $\twonorm{\y - \Y_i \c_i^*} > \beta \varepsilon$. Since $\c^*$ is a feasible solution of the optimization program \eqref{eq:L1-noisy}, we can write
\begin{equation}
\label{eq:feasibility}
\y = \Y_i \c_i^* + \Y_{-i} \c_{-i}^* + \e,
\end{equation}
where $\twonorm{\e} \leq \gamma \varepsilon$. Define
\begin{equation}
\label{eq:ytilde1}
\tilde{\y} \triangleq \y - \Y_i \c_i^*.
\end{equation}
Note that by our assumption, we have $\twonorm{\ty} > \beta \varepsilon$. We arrive at contradiction by taking the following three steps.
\smallskip\newline\noindent {\emph{Step 1:}} Let $\a_{-i}$ be the solution of the optimization program \eqref{eq:subL1noisy2} for $\tilde{\y}$ defined in \eqref{eq:ytilde1}. We show that $\begin{bmatrix} \c_i^{*\top} & \a_{-i}^{\top} \end{bmatrix}^{\top}$ is a feasible solution of \eqref{eq:L1-noisy}, and satisfies 
\begin{equation}
\onenorm{ \begin{bmatrix} \c_i^{*} \\ \a_{-i} \end{bmatrix} } \leq \onenorm{ \begin{bmatrix} \c_i^{*} \\ \c_{-i}^* \end{bmatrix} }.
\end{equation}
\smallskip\newline\noindent{\emph{Step 2:}} Let $\a_{i}$ be the solution of the optimization program \eqref{eq:subL1noisy1} for $\tilde{\y}$ defined in \eqref{eq:ytilde1}. We show that $\begin{bmatrix} \c_i^{*\top}+\a_i^{\top} & \0 \end{bmatrix}^{\top}$ is a feasible solution of \eqref{eq:L1-noisy}.
\smallskip\newline\noindent{\emph{Step 3:}} Combining the results of the first two steps with the main assumption of the theorem, \ie, $\onenorm{\a_{i}} < \onenorm{\a_{-i}}$, we obtain 
\begin{equation}
\label{eq:L1bound1}
\onenorm{ \begin{bmatrix} \c_i^* + \a_i \\ \0 \end{bmatrix} } < \onenorm{ \begin{bmatrix} \c_i^* \\ \a_{-i} \end{bmatrix} } \leq \onenorm{ \begin{bmatrix} \c_i^* \\ \c_{-i}^* \end{bmatrix} }.
\end{equation}
contradicting the optimality of $\begin{bmatrix} \c_i^{*\top} & \c_{-i}^{*\top} \end{bmatrix}^{\top}$ for the optimization program \eqref{eq:L1-noisy}.
\medskip\newline \noindent{\emph{Proof of step 1:}} From \eqref{eq:feasibility}, we have $\tilde{\y} = \Y_{-i} \c_{-i}^* + \e$. In other words, $\tilde{\y}$ can be approximately written as a linear combination of noisy data points in $\Y_{-i}$. Since $\twonorm{\e} \leq \gamma \varepsilon$, we have that $\c_{-i}^*$ is a feasible solution of the optimization program \eqref{eq:subL1noisy2}.
Let $\a_{-i}$ be the optimal solution of \eqref{eq:subL1noisy2} for $\tilde{\y}$, hence 
\begin{equation}
\label{eq:intermed1}
\onenorm{\a_{-i}} \leq \onenorm{\c_{-i}^*}.
\end{equation}
We can write $\ty$ as
\begin{equation}
\label{eq:ytilde2}
\ty = \Y_{-i} \a_{-i} + \v,
\end{equation}
where $\twonorm{\v} \leq \gamma \varepsilon$. Using \eqref{eq:ytilde2} and the definition of $\tilde{\y}$ in \eqref{eq:ytilde1}, \ie, $\tilde{\y} = \y - \Y_i \c_i^*$, we can write $\y$ as
\begin{equation}
\label{eq:yrewritten}
\y = \Y_i \c_i^* + \Y_{-i} \a_{-i} + \v.
\end{equation}
Since $\twonorm{\v} \leq \gamma \varepsilon$, we have that $\begin{bmatrix} \c_i^* \\ \a_{-i} \end{bmatrix}$ is a feasible solution of the $\ell_1$-minimization program \eqref{eq:L1-noisy}. Thus, using \eqref{eq:intermed1}, we obtain the desired result of step 1, \ie, 
\begin{equation}
\label{eq:first-theory-step1}
\onenorm{\begin{bmatrix} \c_i^*\\ \a_{-i} \end{bmatrix}} \leq ~ \onenorm{\begin{bmatrix} \c_i^*\\ \c_{-i}^* \end{bmatrix}}.
\end{equation}
Another result that we use in the proof of step 2 is the fact that, with probability at least $1-\frac{1}{(n N_i)^2}$, we have
\begin{equation}
\label{eq:ctildebound}
\onenorm{\a_{-i}} \leq \onenorm{\begin{bmatrix} \c_i^*\\ \a_{-i} \end{bmatrix}} \leq \onenorm{\begin{bmatrix} \c_i^*\\ \c_{-i}^* \end{bmatrix}} \leq \frac{1}{r_i}.
\end{equation}
which follows from Lemma \ref{lem:L1bound}.
\medskip\newline\noindent{\emph{Proof of step 2:}} Since $\y \in \S_i^{\varepsilon}$, we can write $\y = \x + \z$, where $\x$ is a noise-free data point of unit Euclidean norm in $\S_i$ and $\z$ corresponds to noise whose Euclidean norm is bounded above by $\varepsilon$. Therefore, we can rewrite $\ty$ as
\begin{equation}
\ty = \y - \Y_i \c_i^* = (\x - \X_i \c_i^*) + (\z - \Z_i \c_i^*).
\end{equation}
Note that $\x - \X_i \c_i^*$ is a vector in $\S_i$, since it is a linear combination of noise-free data points in $\S_i$. Also, from Lemmas \ref{lem:noisebound} and \ref{lem:L1bound}, we have that $\twonorm{\z - \Z_i \c_i^*} \leq 0.5 \gamma \varepsilon$ holds with probability at least $1-\frac{1}{( n N_i )^2}$. Thus, $\tilde{\y}$ can be written as the sum of a vector in $\S_i$ plus a noise term whose Euclidean norm, with high probability, is bounded above by $0.5 \gamma \varepsilon$, hence, $\ty \in \bW_i(\beta,\gamma,\varepsilon)$. Thus, for $\tilde{\y}$, the optimization program \eqref{eq:subL1noisy1} has a feasible solution, which we denote by $\a_i$. Note that using the fact that $\ty \in \bW_i(\beta,\gamma,\varepsilon)$ and the assumption of the theorem, we have 
\begin{equation}
\label{eq:intermed2}
\onenorm{\a_i} < \onenorm{\a_{-i}}.
\end{equation}
By the optimality of $\tilde{\y}$ for the $\ell_1$-minimization program \eqref{eq:subL1noisy1}, we can write
\begin{equation}
\label{eq:ytilde4}
\tilde{\y} = \X_i \a_i + \v,
\end{equation}
where $\twonorm{\v} \leq 0.5 \gamma \varepsilon$. Using \eqref{eq:ytilde4} and the definition of $\tilde{\y}$ in \eqref{eq:ytilde1}, \ie, $\ty = \y - \Y_i \c_i^*$, we can write $\y$ as
\begin{equation}
\begin{split}
\y &= \Y_i \c_i^* + \X_i \a_i + \v \\ &= \Y_i (\c_i^* + \a_i) + (\v -\Z_i \a_i),
\end{split}
\end{equation}
where the second equality follows from the definition of $\X_i = \Y_i - \Z_i$. Thus, if $\twonorm{\v -\Z_i \a_i} < \gamma \varepsilon$, we have that $\begin{bmatrix} \c_i^* + \a_i \\ \0 \end{bmatrix}$ is a feasible solution of the optimization program \eqref{eq:L1-noisy}, hence obtaining the desired result of step 2. Notice that combining \eqref{eq:intermed2} and \eqref{eq:ctildebound}, with probability at least $1-\frac{1}{(n N_i)^2}$, we have $\onenorm{\a_{i}} < \frac{1}{r_i}$. Hence, using Lemma \ref{lem:noisebound}, the inequality $\twonorm{\v -\Z_i \a_i} < \gamma \varepsilon$ holds with high probability.
\medskip\newline\noindent{\emph{Proof of step 3:}} Based on the assumption of the theorem, since $\ty \in \bW_i(\beta,\gamma,\varepsilon)$, we have that $\onenorm{\a_i} < \onenorm{\a_{-i}}$. Hence, using the results of steps 1 and 2, we obtain 
\begin{equation}
\label{eq:L1bound1}
\begin{split}
\onenorm{ \begin{bmatrix} \c_i^* + \a_i \\ \0 \end{bmatrix} } &\leq \onenorm{\c_i^*} + \onenorm{\a_i} \\ &< \onenorm{ \begin{bmatrix} \c_i^* \\ \a_{-i} \end{bmatrix} } \leq \onenorm{ \begin{bmatrix} \c_i^* \\ \c_{-i}^* \end{bmatrix} }.
\end{split}
\end{equation}
This contradicts the optimality of $\begin{bmatrix} \c_i^* \\ \c_{-i}^* \end{bmatrix}$ for the optimization program \eqref{eq:L1-noisy}. Hence, we must have
\begin{equation}
\twonorm{\y - \Y_i \c_i^*} \leq \beta \varepsilon.
\end{equation}
\end{proof}
Up to this point, we have shown that, under appropriate conditions, for any noisy data point $\y \in \S_i^\varepsilon$, the solution of the $\ell_1$-minimization program \eqref{eq:L1-noisy} is such that $\y$ will be reconstructed with high accuracy using noisy data points from its own subspace. Next, we show that, in the optimal solution, the coefficients corresponding to data points in other subspaces will be sufficiently small, provided that the noise level $\varepsilon$ is not very large. More specifically, we prove the following result.
\vspace{2mm}
\begin{theorem}[Approximate Support Recovery]
\label{thm:approx-support-recovery}
\emph{
Let $\c^{*\top} = \begin{bmatrix} \c_i^{*\top} & \c_{-i}^{*\top} \end{bmatrix}$ be the solution of the optimization program in \eqref{eq:L1-noisy} for a noisy data point $\y$ in $\S_i^{\varepsilon}$. Assume that $\varepsilon \leq \frac{\gamma}{2 \beta + \gamma} r_i$ and that the approximate reconstruction condition $\twonorm{\y - \Y_i \c_i^*} \leq \beta \varepsilon$ holds. Then, we have
\begin{equation}
\label{eq:approx-supp-recov1}
\onenorm{\c_{-i}^*} \leq \frac{\beta+\gamma / 2}{r_i} \varepsilon.
\end{equation}
}
\end{theorem}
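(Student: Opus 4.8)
The plan is to exploit the optimality of $\c^*$ for \eqref{eq:L1-noisy} against an explicitly constructed competitor that places all of its mass on the columns of $\Y_i$ and hence has an identically zero $-i$ block. The starting point is the residual $\ty \triangleq \y - \Y_i\c_i^*$, which by the assumed reconstruction bound satisfies $\twonorm{\ty} \leq \beta\varepsilon$. Reusing the decomposition from Step 2 of the proof of Theorem \ref{thm:approximatessr1}, I would write $\y = \x + \z$ with $\x \in \S_i$ of unit norm and $\twonorm{\z}\leq\varepsilon$, so that $\ty = (\x - \X_i\c_i^*) + (\z - \Z_i\c_i^*)$, where $\x - \X_i\c_i^* \in \S_i$ and, by Lemmas \ref{lem:noisebound} and \ref{lem:L1bound}, $\twonorm{\z - \Z_i\c_i^*} \leq 0.5\gamma\varepsilon$ with high probability. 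Writing $\tilde{\x} \triangleq \x - \X_i\c_i^* \in \S_i$, the triangle inequality then gives $\twonorm{\tilde{\x}} \leq \twonorm{\ty} + 0.5\gamma\varepsilon \leq (\beta + \gamma/2)\varepsilon$.

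Next I would build the competitor. Since $\tilde{\x}\in\S_i$, Lemma \ref{lem:L1boundnonoise} supplies a $\b$ with $\X_i\b = \tilde{\x}$ and $\onenorm{\b} \leq \twonorm{\tilde{\x}}/r_i \leq (\beta+\gamma/2)\varepsilon/r_i$ (equivalently one may take $\b = \a_i(\ty)$, the solution of \eqref{eq:subL1noisy1}, for which Step 1 of Theorem \ref{thm:approximatessr2} gives the same $\ell_1$ bound). I then consider the vector $[\,(\c_i^*+\b)^\top ~~ \0^\top\,]^\top$, whose residual is $\y - \Y_i(\c_i^*+\b) = (\z - \Z_i\c_i^*) + (\tilde{\x} - \Y_i\b) = (\z - \Z_i\c_i^*) - \Z_i\b$. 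Bounding $\twonorm{\Z_i\b}$ via Lemma \ref{lem:noisebound} and combining with $\twonorm{\z - \Z_i\c_i^*}\leq 0.5\gamma\varepsilon$, I would argue that this competitor keeps its residual within $\gamma\varepsilon$ and is therefore feasible for \eqref{eq:L1-noisy}.

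The conclusion is then immediate: feasibility of the competitor together with the optimality of $\c^*$ yields
\[
\onenorm{\c_i^*} + \onenorm{\c_{-i}^*} = \onenorm{\c^*} \leq \onenorm{\c_i^* + \b} \leq \onenorm{\c_i^*} + \onenorm{\b},
\]
so that $\onenorm{\c_{-i}^*} \leq \onenorm{\b} \leq (\beta+\gamma/2)\varepsilon/r_i$, which is exactly the claimed bound.

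The step I expect to be the main obstacle is the feasibility verification, and this is precisely where the hypothesis $\varepsilon \leq \frac{\gamma}{2\beta+\gamma}r_i$ should be spent. The delicate term is $\twonorm{\Z_i\b}$: Lemma \ref{lem:noisebound} controls it only by $2\varepsilon\sqrt{2(\log N_i+\log n)}\,\onenorm{\b}$, and since $\onenorm{\b}$ itself scales like $\varepsilon/r_i$, this produces a term that must be absorbed into the $0.5\gamma\varepsilon$ of slack remaining after $\twonorm{\z - \Z_i\c_i^*}$. The constraint $\varepsilon \leq \frac{\gamma}{2\beta+\gamma}r_i$ is equivalent to $(\beta+\gamma/2)\varepsilon \leq \tfrac{\gamma}{2}r_i$, i.e. $\onenorm{\b}\leq \gamma/2$, and it is this bound, together with the defining inequality $2\sqrt{2(\log N_i+\log n)} \leq (\tfrac{\gamma}{2}-1)r_i$ built into the choice of $\gamma$, that must be used to keep the residual within tolerance; carrying the constants through carefully is the part that requires the most attention.
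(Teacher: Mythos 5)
Your overall strategy is the same as the paper's: both proofs build the competitor $\begin{bmatrix}(\c_i^*+\b)^\top & \0^\top\end{bmatrix}^\top$ from a minimum-$\ell_1$ representation $\b$ of the noise-free residual $\tilde{\x}=\x-\X_i\c_i^*$ in $\X_i$, verify its feasibility for \eqref{eq:L1-noisy}, and conclude $\onenorm{\c_{-i}^*}\leq\onenorm{\b}\leq(\beta+\gamma/2)\varepsilon/r_i$ from optimality. (The paper routes this through the intermediate program $\min\onenorm{\b}$ s.t.\ $\twonorm{\ty-\Y_i\b}\leq\gamma\varepsilon$, which you collapse into a single comparison; that is harmless.) The gap is in the feasibility check, exactly where you flagged the difficulty but did not close it. You propose to control $\twonorm{\Z_i\b}$ via Lemma \ref{lem:noisebound}, i.e.\ by $2\varepsilon\sqrt{2(\log N_i+\log n)}\,\onenorm{\b}$. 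Inserting $\onenorm{\b}\leq(\beta+\gamma/2)\varepsilon/r_i$ and then the two inequalities you name, $(\beta+\gamma/2)\varepsilon\leq\tfrac{\gamma}{2}r_i$ and $2\sqrt{2(\log N_i+\log n)}\leq(\tfrac{\gamma}{2}-1)r_i$, yields only
\begin{equation}
\twonorm{\Z_i\b}\;\leq\;\Big(\frac{\gamma}{2}-1\Big)\,\frac{\gamma}{2}\,r_i\,\varepsilon,
\end{equation}
and since $(\tfrac{\gamma}{2}-1)r_i\geq 2\sqrt{2(\log N_i+\log n)}>1$ always, this exceeds the $\tfrac{\gamma}{2}\varepsilon$ of slack remaining after $\twonorm{\tilde{\z}}\leq\tfrac{\gamma}{2}\varepsilon$; the competitor's residual is not shown to be within $\gamma\varepsilon$, and no rearrangement of these constants rescues it.

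The fix --- which is what the paper does in its Step 2 --- is to discard Lemma \ref{lem:noisebound} at this point and bound $\twonorm{\Z_i\b}$ by the triangle inequality over columns: each column of $\Z_i$ has Euclidean norm at most $\varepsilon$ with high probability, so $\twonorm{\Z_i\b}\leq\varepsilon\onenorm{\b}\leq\varepsilon\cdot\frac{(\beta+\gamma/2)\varepsilon}{r_i}\leq\frac{\gamma}{2}\varepsilon$, the last step being precisely the hypothesis $\varepsilon\leq\frac{\gamma}{2\beta+\gamma}r_i$. This column-wise bound is sharper than Lemma \ref{lem:noisebound} by the factor $2\sqrt{2(\log N_i+\log n)}$ (that lemma is tuned for a fixed $\ell_1$ budget of order $1/r_i$, not for a coefficient vector that is itself $O(\varepsilon)$), and it is the only place the noise-level hypothesis is spent. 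With that single substitution your argument closes and reproduces the stated bound.
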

\begin{proof}
Since $\y$ is a noisy data point in $\S_i^{\varepsilon}$, we can write $\y = \x + \z$, where $\x$ is a noise-free data point in $\S_i$ and $\z$ corresponds to noise whose Euclidean norm is smaller than or equal to $\varepsilon$. Define $\tilde{\y} \triangleq \y - \Y_i \c_i^*$, hence, from the assumption of the theorem, we have $\twonorm{\tilde{\y}} \leq \beta \varepsilon$. We can write $\tilde{\y}$ as
\begin{equation}
\label{eq:ytildedecomp}
\tilde{\y}=\y -\Y_i \c_i^*=\underbrace{( \x - \X_i \c_i^*)}_{\triangleq \, \tilde{\x}}+\underbrace{( \z - \Z_i \c_i^*)}_{\triangleq \, \tilde{\z}}.
\end{equation}
Notice that $\tilde{\x}$ is a vector in $\S_i$, since it is a linear combination of noise-free data points in $\S_i$, and $\tilde{\z}$ corresponds to noise whose Euclidean norm is bounded as $\twonorm{\tilde{\z}} \leq 0.5 \gamma \varepsilon$, using Lemmas \ref{lem:noisebound} and \ref{lem:L1bound}. We prove the result in \eqref{eq:approx-supp-recov1} by taking the following three steps.
\smallskip\newline\noindent{\emph{Step 1:}} First, we show that the minimum $\ell_1$-norm of representing $\tilde{\x}$, the noise-free part of $\tilde{\y}$, using $\X_i$, the noise-free data points in $\S_i$, is bounded by
\begin{eqnarray}
\label{eq:3step1}
&\min~ \onenorm{\b} \quad \quad  \leq \quad \frac{2 \beta+\gamma}{2 r_i} \, \varepsilon \nonumber\\
&\hspace{-16mm}\st \quad \tilde{\x} = \X_i \b.
\end{eqnarray}
\newline\noindent{\emph{Step 2:}} Next, we show that the minimum $\ell_1$-norm of the approximate representation of $\tilde{\y}$ in terms of noisy data points in $\S_i^\varepsilon$, \ie, $\Y_i$, is bounded by
\begin{eqnarray}
\label{eq:3step2}
\!\!\!\!\!\!\!\!\!\!\!\!\!\!\!\!\!\! &\min \onenorm{\b} \qquad \qquad \qquad \leq \quad &\min \onenorm{\b}\nonumber\\
\!\!\!\!\!\!\!\!\!\!\!\!\!\!\!\!\!\! &\hspace{-6mm}\st \quad \twonorm{\tilde{\y} - \Y_i \b} \leq \gamma \varepsilon &\st \quad \tilde{\x} = \X_i \b.
\end{eqnarray}
\newline\noindent{\emph{Step 3:}} Finally, we prove that, for the solution of the optimization program \eqref{eq:L1-noisy}, the $\ell_1$-norm of the coefficients corresponding to noisy data points in subspaces other than $\S_i^\varepsilon$, \ie, $\onenorm{\c_{-i}^*}$, is bounded by
\begin{eqnarray}
\label{eq:3step3}
\!\!\!\!\!\!\!\!\!\!\!\!\!\!\!\!\!\!\!\!\!\!\!\!\!\!\!\!\!\!\!\!\!\!\!\!\!\!\! \qquad \qquad \qquad \onenorm{\c_{-i}^*} & \quad \leq \quad &\min \onenorm{\b}\nonumber\\
\!\!\!\!\!\! &\text{ }& \st ~~ \twonorm{\tilde{\y} - \Y_i \b} \leq \gamma \varepsilon.
\end{eqnarray}
Combining the results of steps $1$ to $3$, we obtain \eqref{eq:approx-supp-recov1}.
\medskip\newline\noindent{\emph{Proof of step 1:}}
Let  $\b_{i}$ be the solution of the $\ell_1$-minimization program
\begin{equation}
\label{eq:bi-step1}
\b_i \; = \; \argmin \onenorm{\b} \quad \st \quad \tilde{\x} = \X_i \b. 
\end{equation}
Using \eqref{eq:ytildedecomp}, we can write $\tilde{\x} = \tilde{\y} - \tilde{\z}$, where $\twonorm{\tilde{\y}} \leq \beta \varepsilon$ and $\twonorm{\tilde{\z}} \leq 0.5 \gamma \varepsilon$. As a result, the Euclidean norm of $\tilde{\x}$ is bounded by $\twonorm{\tilde{\x}} \leq (\beta+ \gamma / 2)\varepsilon$. 
Thus, using Lemma \ref{lem:L1boundnonoise}, we obtain
\begin{equation}
\label{eq:step1result}
\onenorm{\b_i} \leq \frac{\twonorm{\tilde{\x}}}{r_i} \leq \frac{\beta+\eta}{r_i} \, \varepsilon.
\end{equation}
\newline\noindent{\emph{Proof of step 2:}}
Let $\b_{i}$ be the solution of the optimization program \eqref{eq:bi-step1}, hence, 
\begin{equation}
\label{eq:interm11}
\tilde{\x} = \X_i \b_i.
\end{equation}
Since, using \eqref{eq:ytildedecomp}, we have $\tilde{\x}=\tilde{\y}-\tilde{\z}$, and also $\X_i = \Y_i - \Z_i$, we can rewrite \eqref{eq:interm11} as
\begin{equation}
\tilde{\y} = \Y_i \b_{i} + ( \tilde{\z} - \Z_i \b_{i} ).
\end{equation}
Thus, if we show that $\twonorm{ \tilde{\z} - \Z_i \b_{i} } \leq \gamma \varepsilon$, then we obtain \eqref{eq:3step2}, since $\b_{i}$ is the optimal solution of the right hand-side of \eqref{eq:3step2}, while it is also a feasible solution of the left hand-side of \eqref{eq:3step2}. Notice that the columns of $\X_i$ are data points in a $d_i$-dimensional subspace of $\Re^n$. As a result, from the linear programming theory, the optimal solution of the right hand-side of \eqref{eq:3step2}, $\b_{i}$, has a support whose size is at most $d_i$. Thus, using the fact that the Euclidean norm of the columns of $\Z_i$ is at most $\varepsilon$, we have $\twonorm{\Z_i \b_{i}} \leq \varepsilon \onenorm{\b_{i}} $. Now, using the result of step 1 in \eqref{eq:step1result}, \ie, $\onenorm{\b_{i}} \leq \frac{\beta+\eta}{r_i} \, \varepsilon$, and the assumption of the theorem on the noise level, \ie, $\varepsilon \leq \frac{\gamma}{2 \beta+\gamma} \, r_i$, we obtain
\begin{equation}
\begin{split}
\twonorm{\tilde{\z} - \Z_i \b_{i}} &\leq \twonorm{\tilde{\z}}+\twonorm{\Z_i \b_{i}} \leq \eta \varepsilon+\varepsilon\onenorm{\b_{i}}\\ &\leq \eta \varepsilon + \frac{\beta+\eta}{r_i}\varepsilon^2 \leq 2 \eta \varepsilon.
\end{split}
\end{equation}
\newline\noindent{\emph{Proof of step 3:}}
Let $\b'_i$ be the optimal solution of the right hand-side of \eqref{eq:3step3}, \ie, 
\begin{equation}
\b'_{i} \; = \; \argmin \onenorm{\b} \quad \st \quad \twonorm{\tilde{\y} - \Y_i \b} \leq \gamma \varepsilon. 
\end{equation} 
For the sake of contradiction, assume that the inequality in \eqref{eq:3step3} does not hold, so we have $\onenorm{\b'_{i}} < \onenorm{\c_{-i}^*}$. Using the definition of $\tilde{\y}$ in \eqref{eq:ytildedecomp}, \ie, $\tilde{\y} = \y - \Y_i \c_i^*$, we have
\begin{equation}
\twonorm{\tilde{\y} - \Y_i \b'_{i}} = \twonorm{\y - \Y_i (\c_i^{*} + \b'_{i})} \leq \gamma \varepsilon.
\end{equation}
As a result, $\begin{bmatrix} \c_i^*+ \b'_{i} \\ \0 \end{bmatrix}$ is a feasible solution of the optimization problem \eqref{eq:L1-noisy}. Moreover, we have
\begin{equation}
\onenorm{\begin{bmatrix} \c_i^*+ \b'_{i} \\ \0 \end{bmatrix}} \leq \onenorm{\c_i^*} + \onenorm{\b'_{i}} < \onenorm{\begin{bmatrix} \c_i^* \\ \c_{-i}^*\end{bmatrix}},
\end{equation}
which contradicts the optimality of $\begin{bmatrix} \c_i^* \\ \c_{-i}^* \end{bmatrix}$ for \eqref{eq:L1-noisy}. Thus, we must have $\onenorm{\c_{-i}^*} \leq \onenorm{\b'_{i}}$.
\end{proof}
\vspace{2mm}
Putting the results of Theorems \ref{thm:approximatessr2}, \ref{thm:approximatessr1} and \ref{thm:approx-support-recovery} together, we arrive at our main theoretical results, guaranteeing approximate subspace-sparse recovery in the presenese of noise using the $\ell_1$-minimization program in \eqref{eq:L1-noisy}.
\vspace{1mm}
\begin{theorem}
\label{thm:maintheory-repeat}
\emph{
Assume that the columns of $\Y \in \Re^{n \times N}$ correspond to noisy data points lying in $\{ \S_i^\varepsilon \}_{i=1}^{L}$, with $N_i$ data points in each $\S_i^\varepsilon$. Consider the $\ell_1$-minimization program in \eqref{eq:L1-noisy} with the $\gamma$ defined as 
\begin{equation}
\label{eq:gamma-choice}
\gamma \triangleq \max_{i}{2(1+\frac{2 \sqrt{2(\log N_i+\log n)}}{r_i})}.
\end{equation}
Define $\beta$ as 
\begin{equation}
\label{eq:beta-choice}
\beta \triangleq ( 1 + \max_{i} \,  \frac{3 r_i}{r_i - (\mu_i + \varepsilon)} ) \, \frac{\gamma}{2} + \delta,
\end{equation}
where $\delta > 0$ is arbitrarily small.
Then, for every $i \in \{1, \ldots, L\}$ and every $\y \in \S_i^{\varepsilon}$, the solution of the optimization problem in \eqref{eq:L1-noisy}, with probability at least $1-\frac{1}{(n N_i)^2}$, satisfies 
\begin{equation}
\label{eq:approximate-reconstruction-result}
\twonorm{\y - \Y_i \c_i^*} \leq \beta \varepsilon.
\end{equation}
In addition, assume that $\varepsilon \leq \frac{\gamma}{2 \beta + \gamma} r_i$. Then, we have that
\begin{equation}
\label{eq:support-detection-result}
\onenorm{\c_{-i}^*} \leq \frac{2 \beta+\gamma}{2 r_i} \varepsilon
\end{equation}
holds with probability at least $1-\frac{1}{(n N_i)^2}.$
}
\end{theorem}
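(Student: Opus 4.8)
The plan is to obtain Theorem \ref{thm:maintheory-repeat} as an immediate consequence of the three results already proved, since the parameters $\gamma$ and $\beta$ in \eqref{eq:gamma-choice}--\eqref{eq:beta-choice} coincide verbatim with those appearing in the hypotheses of Theorems \ref{thm:approximatessr2}, \ref{thm:approximatessr1} and \ref{thm:approx-support-recovery}. Fixing an arbitrary $i$ and an arbitrary $\y \in \S_i^\varepsilon$, I would first note that with these choices the hypotheses of Theorem \ref{thm:approximatessr2} hold, so the noisy multi-subspace null-space property $\onenorm{\a_i(\ty)} < \onenorm{\a_{-i}(\ty)}$ is in force for every $\ty \in \bW_i(\beta,\gamma,\varepsilon)$. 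I would also record the elementary observation that \eqref{eq:beta-choice} forces $\beta > 0.5\gamma$: the factor $1 + \max_i \frac{3 r_i}{r_i-(\mu_i+\varepsilon)}$ exceeds $1$ under the necessary condition $\mu_i + \varepsilon < r_i$, and $\delta > 0$.

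Next, I would feed this null-space property into Theorem \ref{thm:approximatessr1}, whose hypothesis is exactly that property together with $\beta > 0.5\gamma$; its conclusion is the first claim \eqref{eq:approximate-reconstruction-result}, namely $\twonorm{\y - \Y_i \c_i^*} \leq \beta \varepsilon$, valid with probability at least $1 - \frac{1}{(n N_i)^2}$. Then, using this reconstruction bound together with the standing assumption $\varepsilon \leq \frac{\gamma}{2\beta+\gamma} r_i$, I would apply Theorem \ref{thm:approx-support-recovery} to obtain $\onenorm{\c_{-i}^*} \leq \frac{\beta + \gamma/2}{r_i}\varepsilon$, and finally rewrite $\frac{\beta + \gamma/2}{r_i} = \frac{2\beta+\gamma}{2 r_i}$ to recover \eqref{eq:support-detection-result}.

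The only point requiring care is the probability accounting. All of the high-probability statements above ultimately rest on the same concentration facts used inside the earlier proofs: the bound of Lemma \ref{lem:noisebound} applied to $\c_i^*$ (equivalently to $\a_i$), the uniform bound that each column of $\Z_{-i}$ has Euclidean norm at most $\varepsilon$, and the $\ell_1$-bound of Lemma \ref{lem:L1bound}. These hold simultaneously with probability at least $1 - \frac{1}{(n N_i)^2}$, so by conditioning on this single event all three theorem applications are valid at once and the final guarantees hold with the stated probability, with no further union-bound degradation.

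I do not anticipate a genuine mathematical obstacle, since this theorem is essentially a corollary packaging Theorems \ref{thm:approximatessr2}--\ref{thm:approx-support-recovery}. The most error-prone aspect is purely bookkeeping: verifying that the identical $\gamma$, $\beta$, and probability event thread consistently through the three invocations, and confirming the algebraic identity $\beta + \gamma/2 = (2\beta+\gamma)/2$ that lines the support bound up with the form stated in \eqref{eq:support-detection-result}.
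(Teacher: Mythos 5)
Your proposal is correct and follows essentially the same route as the paper: the paper's own proof of Theorem \ref{thm:maintheory-repeat} is precisely the chaining of Theorem \ref{thm:approximatessr2} (to establish the noisy null-space property for the stated $\gamma$ and $\beta$), then Theorem \ref{thm:approximatessr1} (for the reconstruction bound), then Theorem \ref{thm:approx-support-recovery} (for the support bound, after rewriting $\frac{\beta+\gamma/2}{r_i}=\frac{2\beta+\gamma}{2r_i}$). Your added remarks on $\beta>0.5\gamma$ and on conditioning all three invocations on the single concentration event are sound and, if anything, slightly more careful than the paper's one-paragraph argument.
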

\vspace{2mm}
\begin{proof}
Given the choice of $\gamma$ in \eqref{eq:gamma-choice} and $\beta$ in \eqref{eq:beta-choice}, from Theorem \ref{thm:approximatessr2},we have that the multi-subspace noisy null-space property holds, \ie, for every $\ty$ in $\bW_i(\beta,\gamma,\varepsilon)$, we have $\onenorm{\a_i(\ty)} < \onenorm{\a_{-i}(\ty)}$, where $\a_{i}(\ty)$ and $\a_{-i}(\ty)$ denote the solutions of \eqref{eq:subL1noisy1} and \eqref{eq:subL1noisy2}, respectively. As a result, the condition of the Theorem \ref{thm:approximatessr1} is satisfied and we have that \eqref{eq:approximate-reconstruction-result} holds, with high probability. Finally, given the approximate reconstruction condition and the assumption of the theorem on the maximum value of $\varepsilon$, from Theorem \ref{thm:approx-support-recovery}, we have that \eqref{eq:support-detection-result} holds, with high probability.
\end{proof}
\vspace{2mm}
Notice that in all of our theoretical results so far we allow for arbitrary subspace arrangements and data distributions in subspaces, without any randomness assumption. In fact, assuming random distribution for data points, we can further show that in the solution of the $\ell_1$-minimization program  \eqref{eq:L1-noisy}, the coefficients from the correct support, \ie, $\c_i^*$, not only reconstruct $\y$ with a high accuracy, but also have sufficiently large values. More specifically, we prove the following result.

%Notice that in all of our theoretical results so far we allow for arbitrary subspace arrangements and data distributions in subspaces, without any randomness assumption. This is in fact an advantage of our theoretical analysis with respect to \cite{Elhamifar:Annals14}, which assumes the more restricted setting where data in each subspace are distributed at random. In fact, assuming random distribution for data points, we can further show that in the solution of the $\ell_1$-minimization program  \eqref{eq:L1-noisy}, the coefficients from the correct support, \ie, $\c_i^*$, not only reconstruct $\y$ with a high accuracy, but also have sufficiently large values. More specifically, we prove the following result.

\vspace{2mm}
\begin{theorem}[Correct Support Detection]
Assume that the noise-free data in each subspace $\S_i$, \ie, the columns of $\X_i$, are drawn uniformly at random from the intersection of the unit hypersphere with $\S_i$. Let $\c^{*\top} = \begin{bmatrix} \c_i^{*\top} & \c_{-i}^{*\top} \end{bmatrix}$ be the solution of the optimization program in \eqref{eq:L1-noisy} for a noisy data point $\y$ in $\S_i^{\varepsilon}$. Assume that the approximate reconstruction condition $\twonorm{\y - \Y_i \c_i^*} \leq \beta \varepsilon$ holds.
Then, with probability at least $1-\frac{2}{N_i^2}$, we have 
\begin{equation}
\label{eq:approxSuppRecov2}
\onenorm{\c_i^*} \geq \frac{1 - (\beta + 1) \, \varepsilon}{2 \sqrt{ \frac{2\log N_i}{d_i} } + 2 \sqrt{ \frac{2\log N_i}{n} } \, \varepsilon}.
\end{equation}
\end{theorem}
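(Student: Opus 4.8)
The plan is to derive the lower bound on $\onenorm{\c_i^*}$ by testing the approximate reconstruction against the fixed clean direction $\x$, and then using H\"older's inequality to peel off $\onenorm{\c_i^*}$ against two ``coherence'' quantities that concentrate. First I would write $\y = \x + \z$ with $\twonorm{\x}=1$, $\twonorm{\z}\leq\varepsilon$, set $\ty \triangleq \y - \Y_i\c_i^*$ so that the hypothesis gives $\twonorm{\ty}\leq\beta\varepsilon$, and take the inner product of $\Y_i\c_i^* = \y - \ty$ with $\x$:
$$\iprod{\x}{\Y_i\c_i^*} = \iprod{\x}{\y} - \iprod{\x}{\ty} = 1 + \iprod{\x}{\z} - \iprod{\x}{\ty} \geq 1 - \varepsilon - \beta\varepsilon,$$
where the last step is Cauchy--Schwarz on the two perturbation terms. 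This $1-(\beta+1)\varepsilon$ lower bound is what drives the whole result.

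Next, since $\Y_i = \X_i + \Z_i$, I would split $\iprod{\x}{\Y_i\c_i^*} = \iprod{\x}{\X_i\c_i^*} + \iprod{\x}{\Z_i\c_i^*}$ and bound each term column-wise by H\"older:
$$\iprod{\x}{\X_i\c_i^*} \leq \onenorm{\c_i^*}\,\max_j \abs{\iprod{\x}{\x_{ij}}}, \qquad \iprod{\x}{\Z_i\c_i^*} \leq \onenorm{\c_i^*}\,\max_j \abs{\iprod{\x}{\z_{ij}}},$$
where $\x_{ij}$ and $\z_{ij}$ denote the $j$-th columns of $\X_i$ and $\Z_i$. The key structural point is that the two maxima are deterministic functionals of the data that do \emph{not} depend on the optimizer $\c_i^*$, so the (otherwise awkward) dependence of $\c_i^*$ on the noise realization is harmless. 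Combining with the first display gives
$$1 - (\beta+1)\varepsilon \leq \onenorm{\c_i^*}\Bigl(\max_j \abs{\iprod{\x}{\x_{ij}}} + \max_j \abs{\iprod{\x}{\z_{ij}}}\Bigr),$$
which, after dividing through, is already of the claimed form.

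It then remains to control the two coherence terms with high probability, treating $\x$ as a fixed unit vector in $\S_i$ independent of the dictionary randomness. For the first, each $\x_{ij}$ is uniform on the unit sphere of the $d_i$-dimensional subspace $\S_i$, so $\iprod{\x}{\x_{ij}}$ has a sub-Gaussian tail with parameter of order $1/\sqrt{d_i}$; a union bound over the $N_i$ columns yields $\max_j \abs{\iprod{\x}{\x_{ij}}} \leq 2\sqrt{2\log N_i / d_i}$ with probability at least $1-\frac{1}{N_i^2}$. For the second, each $\iprod{\x}{\z_{ij}}$ is a centered Gaussian of variance $\epsilon^2/n$ (using $\twonorm{\x}=1$), and the maximum of $N_i$ such variables satisfies $\max_j \abs{\iprod{\x}{\z_{ij}}} \leq 2\varepsilon\sqrt{2\log N_i / n}$ with probability at least $1-\frac{1}{N_i^2}$. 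A union bound over the two events (probability $1-\frac{2}{N_i^2}$) and substitution into the previous inequality produces exactly \eqref{eq:approxSuppRecov2}.

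The hard part will not be the deterministic chain of Cauchy--Schwarz and H\"older inequalities, which is routine, but rather the role of the randomness assumptions: the concentration estimates require $\x$ to be fixed (or at least independent of both $\X_i$ and $\Z_i$) so that the two maxima are genuine maxima of $N_i$ independent, well-concentrated scalars. If $\x$ were permitted to coincide with, or be adversarially aligned to, a dictionary column, then $\max_j \abs{\iprod{\x}{\x_{ij}}}$ could be as large as $1$ and the bound would collapse, so this independence is the essential hypothesis to state carefully. Beyond that, the only genuinely technical piece is pinning down the sub-Gaussian constant for the spherical inner product and matching the factor of $2$ together with the $\epsilon$-versus-$\varepsilon$ bookkeeping in the Gaussian tail bound.
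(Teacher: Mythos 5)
Your proposal follows essentially the same route as the paper's proof: decompose $\y = \x + \z$, use the reconstruction hypothesis to lower-bound $\abs{\x^{\top}(\x+\z-\e)}$ by $1-(\beta+1)\varepsilon$, apply H\"older's inequality to split off $\onenorm{\c_i^*}$ against $\infnorm{\x^{\top}\X_i}$ and $\infnorm{\x^{\top}\Z_i}$, and control those two coherence terms by $2\sqrt{2\log N_i/d_i}$ and $2\varepsilon\sqrt{2\log N_i/n}$ via a union bound over the $N_i$ columns, exactly as the paper does using Lemma \ref{lem:innerproduct-bound}. The independence caveat you flag (that $\x$ must not be adversarially aligned with, or equal to, a dictionary column for the concentration of the maxima to hold) is a genuine subtlety that the paper itself leaves implicit, so raising it is to your credit rather than a gap in your argument.
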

\begin{proof}
Our goal is to find a lower bound on the $\ell_1$-norm of $\c_i^*$. Since $\y$ belongs to $\S_i^{\varepsilon}$, it can be written as $\y = \x + \z$, where $\x$ is a vector of unit Euclidean norm in $\S_i$ and $\z$ corresponds to noise, where $\twonorm{\z} \leq \varepsilon$. Using the assumption of the theorem, \ie, $\twonorm{\y - \Y_i \c_i^*} \leq \beta \varepsilon$, we can write
\begin{equation}
\y = \Y_i \c_i^* + \e,
\end{equation}
where $\twonorm{\e} \leq \beta \varepsilon$. Since $\Y_i = \X_i + \Z_i$, we can rewrite the above equation as
\begin{equation}
\label{eq:xzzt}
\x + \z - \e = (\X_i + \Z_i) \c_i^*.
\end{equation}
Multiplying both sides of \eqref{eq:xzzt} from left by $\x^{\top}$, and taking the absolute values, we have
\begin{equation}
\label{eq:xzzt2}
\abs{ \x^{\top} (\x + \z - \e) } = \abs{ \x^{\top} (\X_i + \Z_i) \c_i^* }.
\end{equation}
Note that the left hand-side of \eqref{eq:xzzt2} is bounded by
\begin{equation}
\label{eq:bounds1}
1- (\beta + 1) \, \varepsilon ~ \leq ~ \abs{ \x^{\top} (\x + \z - \e) }.
\end{equation}
On the other hand, using the H\"older's inequality, the right hand-side of \eqref{eq:xzzt2}, with probability at least $1 - \frac{2}{N_i^2}$, is bounded by
\begin{equation}
\begin{split}
\label{eq:bounds2}
\abs{ \x^{\top} (\X_i + \Z_i) \c_i^* } & \leq \infnorm{\x^{\top}(\X_i+\Z_i)} \onenorm{\c_i^*}\\
& \leq (\infnorm{\x^{\top} \X_i} \!\! + \infnorm{\x^{\top} \Z_i})\onenorm{\c_i^*}\\
& \leq 2 ( \sqrt{ \frac{2\log N_i}{d_i} } + \sqrt{ \frac{2\log N_i}{n} } \, \varepsilon) \onenorm{\c_i^*}.
\end{split}
\end{equation}
The last inequality in the above follows from Lemma \ref{lem:innerproduct-bound} in the Appendix. Finally, using the lower-bound in \eqref{eq:xzzt2} and the upper-bound in \eqref{eq:bounds1}, for \eqref{eq:bounds2}, we obtain
\begin{equation}
1 - (\beta + 1) \, \varepsilon \leq 2 ( \sqrt{\frac{2\log N_i}{d_i}} + \sqrt{ \frac{2\log N_i}{n} } \, \varepsilon) \onenorm{\c_i^*},
\end{equation}
hence, we arrive at our desired result in \eqref{eq:approxSuppRecov2}.
\end{proof}
%

%%%%%%%%%%%%%%%%%%%%%%%%%%%%%%%%%%%%%%%%%%%%%%%%
\section{Conclusions}
\label{sec:conclusion}
In this paper, we considered the problem of finding sparse representations for noisy data points in a dictionary that consists of corrupted data lying close to a union of subspaces. More specifically, we assumed that the columns of the dictionary correspond to data points drawn from a union of subspaces and corrupted by Gaussian noise whose Euclidean norm is about $\varepsilon$. We studied a constrained $\ell_1$-minimization program and showed that under appropriate conditions on the subspace-inradius and subspace-coherence parameters, the solution of the proposed optimization recovers a solution satisfying approximate subspace-sparse recovery. In other words, we showed that a noisy data point will be reconstructed using data points from its underlying subspace with an error that is of the order of $O(\varepsilon)$, while coefficients corresponding to data points in other subspaces are sufficiently small, of the order of $O(\varepsilon)$. To achieve this result, we developed an analysis framework based on a novel generalization of the null-space property to the setting where data lie in multiple subspaces, the number of data points in each subspace exceeds the dimension of the subspace and all data points are corrupted by noise. Finally, assuming random distribution for data points, we further showed that, in the solution of the constrained optimization, coefficients from the desired support not only reconstruct a given point with high accuracy, but also have sufficiently large values, \ie, are of the order of $O(1)$. %A limitation of our current result for constrained $\ell_1$-minimization is approximate instead of exact subspace-sparse recovery guarantees. Generalizing our results to exact subspace-sparse recovery is the subject of our ongoing research. 

%%%%%%%%%%%%%%%%%%%%%%%%%%%%%%%%%%%%%%%%%%%%%%%%%

\appendices

\section{}
In the paper, we used the fact that for a Gaussian random vector $\z \in\R^n$ with i.i.d entries drawn from $\N(0,\frac{\epsilon^2}{n})$, with high probability, we have $\twonorm{\z} \leq \varepsilon$. To see this, notice that if $z_i \sim \N(0,\frac{\epsilon^2}{n})$, then $q_i \triangleq \frac{n}{\varepsilon^2} z_i^2$ follows a $\chi^2$ distribution with one degree of freedom. We use the following Lemma from \cite{Laurent:ASTAT00}, which provides a bound on linear combination of $\chi^2$ random variables.
\begin{lemma}
Let $q_1, \ldots, q_n$ be independent $\chi^2$ random variables, each with one degree of freedom. For any vector $\a = \begin{bmatrix} a_1 \!&\! \cdots \!&\! a_n \end{bmatrix}^\top \in \Re^n_+$ with nonnegative entries, and for any $t > 0$, we have 
\begin{equation}
\text{Pr} \left [ \sum_{i=1}^{n}{ a_i q_i } > \onenorm{\a} + 2 \sqrt{t} \twonorm{\a} + 2 t \, \infnorm{\a} \right ] \leq e^{-t}.
\end{equation}
\end{lemma}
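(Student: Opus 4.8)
The plan is to prove this tail bound by the classical Cramér--Chernoff (moment generating function) method. First I would recall that a $\chi^2$ random variable with one degree of freedom can be written as $q_i = g_i^2$ with $g_i \sim \N(0,1)$, so that its moment generating function is $E[e^{s q_i}] = (1-2s)^{-1/2}$, finite exactly for $s < 1/2$. Writing $Z \triangleq \sum_{i=1}^n a_i q_i$ and using independence together with the nonnegativity of the $a_i$, the log-moment generating function factorizes as $\log E[e^{s Z}] = -\tfrac12 \sum_{i=1}^n \log(1 - 2 a_i s)$, which is finite for all $0 \le s < 1/(2\infnorm{\a})$.

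Next I would center $Z$ by its mean $E[Z] = \sum_i a_i = \onenorm{\a}$ (since $E[q_i]=1$ and $a_i \ge 0$) and bound the centered log-MGF term by term. The elementary inequality $-\tfrac12 \log(1-2y) - y \le \tfrac{y^2}{1-2y}$, valid for $0 \le y < 1/2$ (verified by comparing the two power series coefficient by coefficient), applied with $y = a_i s$ and then using $a_i \le \infnorm{\a}$ in each denominator, gives
\begin{equation}
\log E\!\left[e^{s(Z - \onenorm{\a})}\right] \;\le\; \sum_{i=1}^n \frac{(a_i s)^2}{1 - 2 a_i s} \;\le\; \frac{s^2 \, \twonorm{\a}^2}{1 - 2 s \, \infnorm{\a}}, \qquad 0 \le s < \frac{1}{2\infnorm{\a}}.
\end{equation}
This exhibits $Z$ as a sub-gamma random variable with variance factor $v = 2\twonorm{\a}^2$ and scale parameter $c = 2\infnorm{\a}$.

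The final step is the standard sub-gamma deviation estimate. Applying Markov's inequality to $e^{sZ}$ yields the Chernoff bound $\Pr[Z - \onenorm{\a} > u] \le \exp\big(-su + \tfrac{s^2\twonorm{\a}^2}{1-2s\infnorm{\a}}\big)$, and optimizing over $s \in (0, 1/(2\infnorm{\a}))$ shows that for $u = \sqrt{2vt} + ct = 2\sqrt{t}\,\twonorm{\a} + 2t\,\infnorm{\a}$ the exponent is at most $-t$, yielding exactly $\Pr[Z > \onenorm{\a} + 2\sqrt{t}\,\twonorm{\a} + 2t\,\infnorm{\a}] \le e^{-t}$. Concretely, the Legendre transform of $s \mapsto \tfrac{(v/2) s^2}{1-cs}$ is $r \mapsto \tfrac{v}{c^2} h(cr/v)$ with $h(x) = 1 + x - \sqrt{1+2x}$, whose generalized inverse is precisely $\sqrt{2vt}+ct$; selecting the maximizing $s$ accordingly closes the argument.

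The main obstacle I anticipate is this last step: carrying out the optimization over $s$ cleanly so that the two separate terms $2\sqrt{t}\,\twonorm{\a}$ and $2t\,\infnorm{\a}$ emerge with exactly the stated constants, rather than merely $O(\cdot)$ bounds. The term-by-term MGF estimate is routine; the delicate point is reconciling in a single inequality the two regimes---the variance-dominated (Gaussian) regime where $2\sqrt{t}\,\twonorm{\a}$ controls, and the heavy-tailed regime where $2t\,\infnorm{\a}$ controls---which is exactly what the sub-gamma Legendre-transform computation accomplishes.
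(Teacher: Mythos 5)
Your proof is correct. Note that the paper does not prove this lemma at all: it is quoted verbatim from the cited reference [Laurent and Massart, Ann.\ Statist.\ 2000], so there is no in-paper argument to compare against. Your Cram\'er--Chernoff derivation is essentially the proof given in that reference (their Lemma 1): the term-by-term bound $-\tfrac12\log(1-2y)-y\le y^2/(1-2y)$ checks out by comparing the coefficients $2^{k-1}/k$ and $2^{k-2}$ for $k\ge 2$, the resulting sub-gamma bound has variance factor $v=2\twonorm{\a}^2$ and scale $c=2\infnorm{\a}$, and the standard inversion $\sqrt{2vt}+ct=2\sqrt{t}\,\twonorm{\a}+2t\,\infnorm{\a}$ gives exactly the stated constants. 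The one step you flag as delicate---optimizing over $s$---is indeed the only place requiring care, and the Legendre-transform identity you invoke (with $h(x)=1+x-\sqrt{1+2x}$) resolves it exactly rather than up to constants, so there is no gap.
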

If we set $a_i = \varepsilon^2/n$ for all $i = 1, \ldots, n$, then using the above lemma, we obtain that
\begin{equation}
\text{Pr} \left [ \twonorm{\z}^2 > \varepsilon^2 (1 + \rho)^2 \right ] \leq e^{- \frac{(1 + \rho)^2 - \sqrt{2 (1 + \rho)^2 - 1}}{2} n }
\end{equation}
holds for any $\rho > 0$.

We also have the following Lemma, which provides a bound on the inner product between a fixed vector and a matrix of Gaussian random variables.
\begin{lemma}
\label{lem:innerproduct-bound}
Assume $\A \in \Re^{m \times N}$ has i.i.d entries drawn from $\N(0,\sigma^2)$. Let $\x \in \Re^{m}$ be a vector of unit Euclidean norm. We have
\begin{equation}
\text{Pr} \left [ \infnorm{ \A^\top \z } \leq 2 \sqrt{ 2 \log{N} } \, \sigma \right ] \geq 1 - \frac{1}{N^2}.
\end{equation}
\end{lemma}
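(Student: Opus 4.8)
\emph{Plan.} The statement is a standard Gaussian sup-norm concentration bound, and I would prove it by reducing to $N$ scalar Gaussian tail estimates followed by a union bound. (I read the quantity in the statement as $\A^\top \x$, with $\x$ the fixed unit-norm vector introduced in the lemma.)

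First I would identify the distribution of each coordinate of $\A^\top \x$. Writing $\a_j \in \Re^m$ for the $j$-th column of $\A$, the $j$-th entry of $\A^\top \x$ is the inner product $\a_j^\top \x = \sum_{k=1}^{m} A_{kj} x_k$. Since the entries $A_{kj}$ are i.i.d. $\N(0,\sigma^2)$, this is a linear combination of independent Gaussians with fixed coefficients, hence itself Gaussian with mean $0$ and variance $\sigma^2 \sum_{k} x_k^2 = \sigma^2 \twonorm{\x}^2 = \sigma^2$. Thus every coordinate of $\A^\top \x$ is distributed as $\N(0,\sigma^2)$; moreover the $N$ coordinates are independent because the columns $\a_j$ are, though independence is not actually needed for the union-bound step.

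Next I would apply the standard two-sided Gaussian tail inequality: for $g \sim \N(0,\sigma^2)$ and any $t>0$,
\begin{equation}
\Pr\left[\, |g| > t \,\right] \leq 2 \, e^{- t^2 / (2\sigma^2)}.
\end{equation}
Choosing the threshold $t = 2\sqrt{2\log N}\,\sigma$ gives $t^2/(2\sigma^2) = 4\log N$, so the per-coordinate failure probability is at most $2 e^{-4\log N} = 2/N^4$. A union bound over the $N$ coordinates then yields
\begin{equation}
\Pr\left[\, \infnorm{\A^\top \x} > 2\sqrt{2\log N}\,\sigma \,\right] \leq N \cdot \frac{2}{N^4} = \frac{2}{N^3} \leq \frac{1}{N^2},
\end{equation}
where the last inequality holds for all $N \geq 2$, establishing the claimed bound $1 - 1/N^2$.

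There is essentially no serious obstacle here, as the argument is a textbook maximum-of-Gaussians estimate. The only point requiring care is the bookkeeping of constants: verifying that the specific threshold $2\sqrt{2\log N}\,\sigma$ makes the exponent exactly $4\log N$, so that after multiplying the per-coordinate bound $2/N^4$ by $N$ the result $2/N^3$ still fits under the advertised slack $1/N^2$. The factor of $2$ from the two-sided tail is absorbed by this slack for $N \geq 2$, so no sharper tail bound or Gaussian-width machinery is needed.
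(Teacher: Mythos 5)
Your proposal is correct. One thing to note: the paper itself gives no proof of Lemma \ref{lem:innerproduct-bound}; it is stated in the appendix as a standard auxiliary fact, so there is nothing to compare against beyond confirming that your argument is the intended one. It is: you correctly read the misprint $\A^\top \z$ as $\A^\top \x$ (consistent with how the lemma is invoked to bound $\infnorm{\x^\top \Z_i}$ in the Correct Support Detection theorem), each coordinate $\a_j^\top \x$ is indeed $\N(0,\sigma^2)$ because $\twonorm{\x}=1$, and the bookkeeping checks out: with $t = 2\sqrt{2\log N}\,\sigma$ the exponent is $t^2/(2\sigma^2) = 4\log N$, so the union bound gives failure probability at most $2/N^3 \leq 1/N^2$ for $N \geq 2$, while the case $N=1$ is vacuous since the claimed bound is then $1 - 1/N^2 = 0$.
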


\section{Proof of Lemma \ref{lem:noisebound}}
Denote the $j$-th row of the noise matrix $\Z_i$ by $\Z_i^{(j)} \in \R^{N_i}$. We can write
\begin{equation}
\label{eq:noisebound1}
\twonorm{\Z_i \c_i}^2 \leq \sum_{j=1}^{n}\langle \Z_i^{(j)},\c_i \rangle^2 \leq \sum_{j=1}^{n}\infnorm{\Z_i^{(j)}}^2\onenorm{\c_i}^2 
\end{equation}
Since each entry of $\Z_i$ has a standard deviation of $\frac{\varepsilon}{\sqrt{n}}$, with probability at least $1-\frac{1}{(n N_i)^2}$, we have
\begin{equation}
\label{eq:noisebound2}
\infnorm{\Z_i^{(j)}} \leq 2 \varepsilon \, \sqrt{\frac{2 (\log N_i+\log n)}{n}}, \quad \forall j \in \{1, \ldots, n\}.
\end{equation}
Substituting \eqref{eq:noisebound2} into \eqref{eq:noisebound1}, we have that, with probability at least $1-\frac{1}{(n N_i)^2}$, the following inequality holds,
\begin{equation}
\twonorm{\Z_i \c_i}\le 2 \varepsilon \sqrt{ 2 (\log N_i+\log n) } \onenorm{\c_i}.
\end{equation}

\section{Proof of Lemma \ref{lem:L1bound}}
Note that $\y$ can be written as $\y = \x + \z$, where $\x \in \S_i$ has unit Euclidean norm and $\twonorm{\z} \leq \varepsilon$. Notice that $\x$ can be written as a linear combination of noise-free data points in $\S_i$. Let
\begin{equation}
\c_i^*=\argmin \onenorm{\c} \quad \st \quad \x = \X_i \c.
\end{equation}
Then from Lemma \ref{lem:L1boundnonoise}, we have $\onenorm{\c_i^*} \leq \frac{1}{r_i}$. On the other hand, we can rewrite $\x = \X_i \c_i^*$ as
\begin{equation}
\y - \z = (\Y_i - \Z_i) \c_i^*, 
\end{equation}
from which we obtain,
\begin{equation}
\y = \Y_i \c_i^* + (\z - \Z_i \c_i^*).
\end{equation}
From Lemma \ref{lem:noisebound}, with probability at least $1-\frac{1}{(n N_i)^2}$, we have
\begin{equation}
\twonorm{ \z - \Z_i \c_i^*} ~ \leq ~ \varepsilon \, (1 + \frac{2\sqrt{2 (\log N_i+\log n )}}{r_i}).
\end{equation}
As a result, with high probability, $\begin{bmatrix} \c_i^* \\ \0 \end{bmatrix}$ is a feasible solution of the optimization program \eqref{eq:L1noisy}. Thus, we must have 
\begin{equation}
\onenorm{\c^*} \leq \onenorm{\c_i^*} \leq \frac{1}{r_i}.
\end{equation}

\ifCLASSOPTIONcaptionsoff
  \newpage
\fi

{
\bibliographystyle{IEEEtran}
\bibliography{biblio/math,biblio/sparse,biblio/vidal,biblio/learning,biblio/vision,biblio/recognition,biblio/segmentation,biblio/control}
}

\end{document}